\def\1{\bm{1}}
\DeclareMathAlphabet{\mathsfit}{\encodingdefault}{\sfdefault}{m}{sl}
\SetMathAlphabet{\mathsfit}{bold}{\encodingdefault}{\sfdefault}{bx}{n}
\newtheorem{definition}{Definition}
\newtheorem{proposition}{Proposition}
\newtheorem{theorem}{Theorem}
\newtheorem{lemma}{Lemma}
\newtheorem{remark}{Remark}
\newcommand{\bv}{{\big\Vert}}
\newcommand{\cst}{{{\rm cst}}}
\newcommand{\sigmam}{(\sigma,m)}
\newcommand{\tsigmatau}{A}
\newcommand{\diag}{{\rm diag}}
\newcommand{\tr}{{{\rm tr}\,}}
\newcommand{\cov}{{{\rm cov}}}
\newcommand{\mcmc}{{\textrm{MCMC}}}
\newcommand{\logsumexp}{{\mathsf{logsumexp}}} 
\newcommand{\hess}{H}
\newcommand{\mypreceq}{\preccurlyeq}
\newcommand{\mysucceq}{\succcurlyeq}
\newcommand{\myleq}{\leqslant}
\newcommand{\mygeq}{\geqslant}
\newcommand{\myhat}{\hat}
\newcommand{\axis}{\theta}
\def\<{\begin{equation}}
\def\>{\end{equation}}
\numberwithin{equation}{section}
\title{Chain of Log-Concave Markov Chains}
\newcommand{\upstairs}[1]{\textsuperscript{#1}}
\newcommand{\affilone}{1}
\newcommand{\affiltwo}{2}
\author{Saeed Saremi\upstairs{\affilone}, Ji Won Park\upstairs{\affilone}, Francis Bach\upstairs{\affiltwo} \quad  \\
    \upstairs{\affilone}Frontier Research, Prescient Design, Genentech, South San Francisco, CA \\
    \upstairs{\affiltwo}Inria, Ecole Normale Sup\'erieure, Universit\'e PSL, Paris, France \\
}
\begin{document}

\maketitle

\begin{abstract}
We introduce a theoretical framework for sampling from unnormalized densities based on a smoothing scheme that uses an isotropic Gaussian kernel with a single fixed noise scale. We prove one can decompose sampling from a density (minimal assumptions made on the density) into a sequence of sampling from log-concave conditional densities via accumulation of noisy measurements with equal noise levels. Our construction is unique in that it keeps track of a history of samples, making it non-Markovian as a whole, but it is lightweight algorithmically as the history only shows up in the form of a running empirical mean of samples. Our sampling algorithm generalizes walk-jump sampling~\citep{saremi2019neural}. The ``walk'' phase becomes a (non-Markovian) chain of (log-concave) Markov chains. The ``jump'' from the accumulated measurements is obtained by empirical Bayes. We study our sampling algorithm quantitatively using the 2-Wasserstein metric and compare it with various Langevin MCMC algorithms. We also report a remarkable capacity of our algorithm to ``tunnel'' between modes of a distribution. 
    
\end{abstract}

\section{Introduction}
Markov chain Monte Carlo (MCMC) is an important class of general-purpose algorithms for sampling from an unnormalized probability density of the form $p(x) =e^{-f(x)}/Z$ in $\mathbb{R}^d$. This is a fundamental problem and appears in a variety of fields, e.g., statistical physics going back to 1953~\citep{metropolis1953equation}, Bayesian inference~\citep{neal1995Bayesian}, and molecular dynamics simulations~\citep{leimkuhler2015molecular}. The biggest challenge facing MCMC is that the distributions of interest lie in very high dimensions and are far from being log-concave, therefore the probability mass is concentrated in small pockets separated by vast empty spaces. These large regions with small probability mass make navigating the space using Markov chains very slow. The second important challenge facing MCMC is that the log-concave pockets themselves are typically ill-conditioned\textemdash highly elongated, spanning different directions for different pockets\textemdash which only adds to the complexity of sampling.  


The framework we develop in this paper aims at addressing these problems. The general philosophy here is that of {\bf smoothing}, by which we expand the space from $\mathbb{R}^d$ to $\mathbb{R}^{md}$ for some integer $m$ and ``fill up'' the empty space iteratively with probability mass in an approximately isotropic manner, the degree of which we can control using a single smoothing (noise) hyperparameter $\sigma$. The map from (noisy samples in) $\mathbb{R}^{md}$ back to (clean samples in) $\mathbb{R}^d$ is based on the {\bf empirical Bayes} formalism~\citep{robbins1956empirical, miyasawa1961empirical, saremi2019neural, saremi2022multimeasurement}. In essence, a single ``jump'' using the empirical Bayes estimator removes the masses that were created during sampling. We prove a general result that, for any large $m$, the problem of sampling in $\mathbb{R}^{d}$ can be reduced to sampling from a sequence of log-concave densities: {\bf once log-concave, always log-concave.} The trade-off here is  the linear time cost of accumulating noisy measurements over $m$ iterations.

More formally, instead of sampling from $p(x)$, we sample from the density $p(y_{1:m})$ that is associated with $Y_{1:m}\coloneqq (Y_1,\dots,Y_m)$, where $Y_t = X+N_t$, $N_t \sim \mathcal{N}(0,\sigma^2 I)$, all independent for $t$ in $[m]$. As we show in the paper, there is a duality between sampling from $p(x)$ and sampling from $p(y_{1:m})$ in the regime where $m^{-1/2}\sigma$ is small, irrespective of how large $\sigma$ is. This is related to the notion of {\bf universality} class underlying the smoothed densities. 
Crucial to our formalism is keeping track of the history of all the noisy samples generated along the way using the factorization
\< \label{eq:oneatatime} p(y_{1:m}) = p(y_1) \prod_{t=2}^m p(y_t|y_{1:t-1}).\>
 An important element of this sampling scheme is therefore {\bf non-Markovian}.  However, related to our universality results, this history only needs to be tracked in the form of an empirical mean, so {the memory footprint is minimal} from an algorithmic perspective. See \autoref{fig:schematic} for a schematic. 

\begin{figure}[t!]
\begin{center}
\includegraphics[trim={0 0 0 0},clip,width=0.9\textwidth]{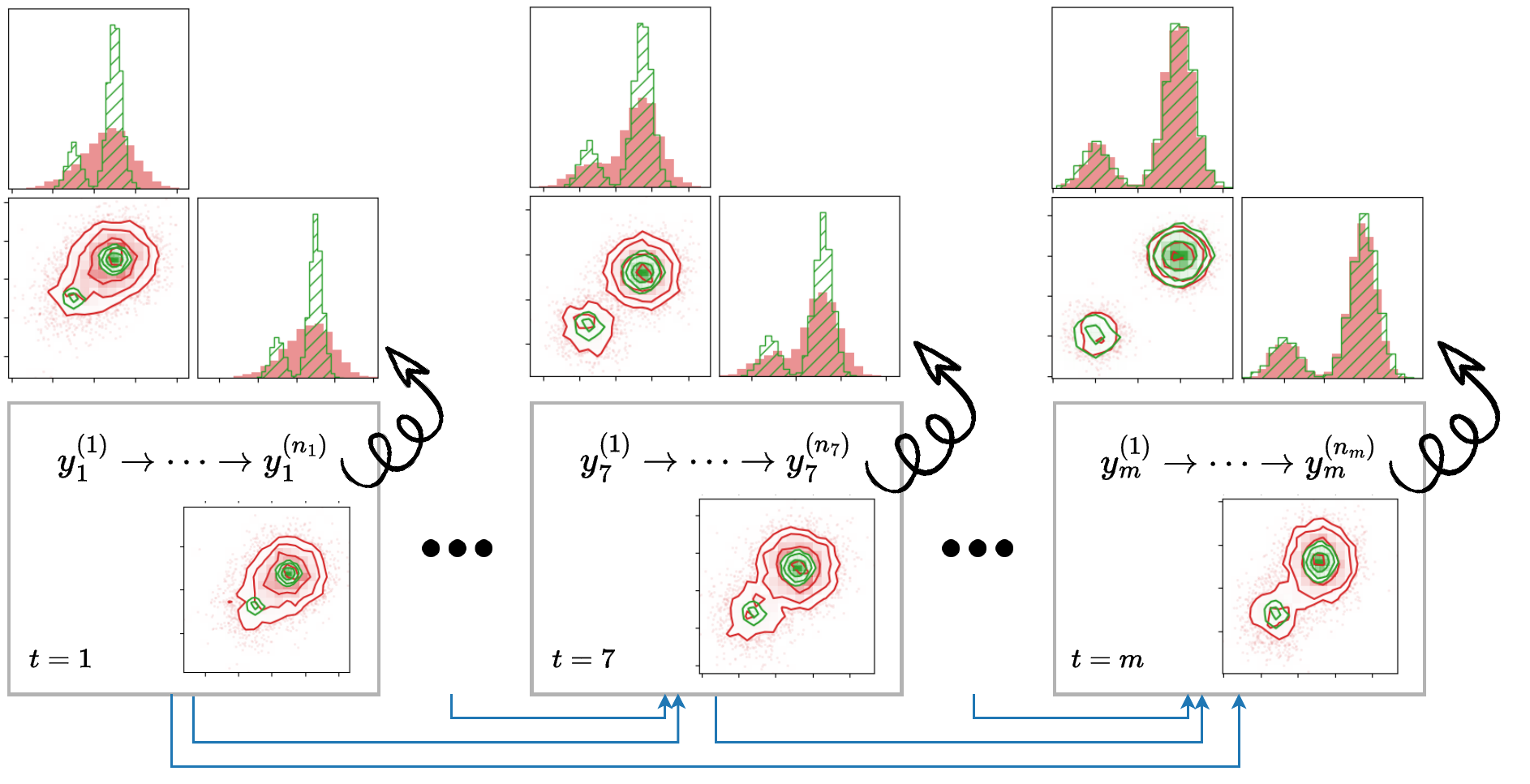}
\end{center}

\vspace*{-.2cm}

\caption{\label{fig:schematic} 
Chain of log-concave Markov chains. Here, $(y_t^{(i)})_{i \in [n_t]}$ are samples from a Markov chain, which is used to generate independent draws from $p(y_t|y_{1:t-1})$ for $t \in [m]$. The blue arrows indicate the non-Markov aspect of our sampling scheme: the accumulation of noisy measurements. The wiggly arrows indicate the denoising ``jumps''. In this example, $p(y_t|y_{1:t-1})$ is log-concave for all $t$, but the jumps asymptotically sample the target density (a mixture of two Gaussians) as $t$ increases.
}
\end{figure}

A more technical summary of our contributions and the outline of the paper are as follows:
\begin{itemize}
    \item In \autoref{sec:universal}, we prove {\bf universality} results underlying the smoothed densities $p(y_{1:m})$. 
    \item We study anisotropic Gaussians in \autoref{sec:geometry}, proving a  \emph{negative result} regarding the condition number of $p(y_{1:m})$ in comparison to $p(y_1)$ in the same universality class. This analysis becomes a segue to our factorization  \eqref{eq:oneatatime}, where in remarkable contrast we show that the condition number {\bf monotonically improves} upon accumulation of measurements.
    \item \autoref{sec:log-concave} is at the heart of the paper, where we prove several results  culminating in \autoref{theorem:compact}, which shows that a broad class of sampling problems can be transformed into a sequence of sampling from strongly {\bf log-concave} distributions using our measurement accumulation scheme. (This is a feasibility result; in particular, we do not prove here that the log-concave sampling strategy is optimal.) We examine the theorem by algebraically studying an example of a mixture of Gaussians in detail. In \autoref{sec:log-concave}, we also outline our general sampling algorithm.
    \item We validate our algorithm on carefully designed test densities in \autoref{sec:experiments}. In particular, our algorithm results in lower 2-Wasserstein metric compared to sampling from $p(x)$ using Langevin MCMC (without any smoothing). We also qualitatively report the capacity of our log-concave sampling scheme to {\bf tunnel} to a mode of a distribution with a small probability mass in a small number of steps when it is initialized at a mode with much higher mass.

\end{itemize}
\subsection{Related Work}
Our solution, sketched above, has its roots in  {\bf walk-jump} sampling~\citep{saremi2019neural} and its recent generalization~\citep{saremi2022multimeasurement}. Both papers were framed within the context of generative modeling, i.e., sampling from an \emph{unknown} distribution from which one has access to independent samples.  In contrast, this work lays the theoretical foundation for the fundamental problem of sampling from an unnormalized density when there are no samples available. In addition, regarding the recent development, we show analytically that the intuition expressed by~\citet{saremi2022multimeasurement} regarding the distribution $p(y_{1:m})$ being well-conditioned is not correct. This nontrivial negative result motivates our analysis of the non-Markovian scheme for sampling $p(y_{1:m})$.

Our methodology is agnostic to the algorithm used for sampling from $p(y_t|y_{1:t-1})$ in \eqref{eq:oneatatime}. However, we have been particularly motivated by the research on {\bf Langevin MCMC} which is a class of gradient-based sampling algorithms obtained by discretizing the Langevin diffusion~\citep{parisi1981correlation}. There is a growing body of work on the analysis of Langevin MCMC algorithms of various complexity (overdampled, Metropolis-adjusted, underdamped, higher-order) for sampling from log-concave distributions~\citep{dalalyan2017theoretical, durmus2017nonasymptotic, cheng2018underdamped, dwivedi2018log, shen2019randomized, cao2020complexity, mou2021high, li2022sqrt}. In our experiments, we compare several recent Langevin MCMC methods, which may be of independent interest.

There is a significant body of work on {\bf sequential} methods for sampling, rooted in annealing methods in optimization~\citep{kirkpatrick1983optimization}, which became popular in the MCMC literature due to Neal's seminal paper on annealed importance sampling~\citep{neal2001annealed}. Diffusion models~\citep{sohl2015deep, ho2020denoising} are a related class of sequential methods for generative modeling. Our sequential scheme is distinguished from earlier methods from separate angles: {\bf (I)} Although we sample the conditional densities with Markov chains, we condition on all the previous samples that were generated. As a whole, our scheme is strictly \emph{non-Markovian}. {\bf (II)} In our sequential scheme, we are able to guarantee that we sample from (progressively more)  \emph{log-concave} densities. To our knowledge, no other sampling frameworks can make such guarantees. {\bf (III)} Compared to diffusion models, the noise level in our framework is held \emph{fixed}. This is an important feature of our sampling algorithm and it underlies many of its theoretical properties. {\bf (IV) }  All prior sequential schemes rely on a noising/annealing \emph{schedule} which is hard to tune, and their performance is sensitive to the choice of the schedule~\citep{karras2022elucidating, syed2022non}.  In contrast, our sequential scheme is free of scheduling and relies on only two parameters: the noise level $\sigma$  and the number of measurements $m$.

\paragraph{Notation.} We use $p$ to denote probability density functions and adopt the convention where we drop the random variable subscript to $p$ when the arguments are present, e.g., $p(x)\coloneqq p_X(x)$, $p(y_2|y_1) \coloneqq p_{Y_2|Y_1=y_1}(y_2)$. We reserve $f$ to be the energy function associated with  $p(x) \propto e^{-f(x)}$. We use $\lambda$ to denote the spectrum of a matrix, e.g., $\lambda_{\max}(C)$ is the largest eigenvalue of $C$. We use the shorthand notations $[m]=\{1,\dots,m\}$, $y_{1:m} = (y_1,\dots,y_m)$, and $\overline{y}_{1:m}=\frac{1}{m} \sum_{t=1}^m y_t$. 

\section{Universal $\sigmam$-densities} \label{sec:universal}
Consider the multimeasurement (factorial kernel) generalization of the kernel density by~\citet{saremi2022multimeasurement} for $m$ isotropic Gaussian kernels with equal noise level  (kernel bandwidth) $\sigma$:
\<\label{eq:sigma-m-density} p(y_{1:m}) \propto \int_{\mathbb{R}^d} e^{-f(x)} \exp\Bigl(-\frac{1}{2\sigma^2} \sum_{t=1}^m \bv x - y_t\bv^2 \Bigr) dx.\>
We refer to $p(y_{1:m})$ as the \emph{$(\sigma,m)$-density}. Equivalently, $Y_t|x \sim \mathcal{N}(x,\sigma^2 I)$, $t \in [m]$ all independent. Clearly, $p(y_{1:m})$ is permutation invariant
$ p(y_1,\dots, y_m) = p(y_{\pi(1)},\dots,y_{\pi(m)}),$ where $\pi: [m] \rightarrow [m]$ is a permutation of the $m$ measurements. We set the stage for the remainder of the paper with a calculation that shows the permutation invariance takes the following form (see \autoref{sec:app:universal}):
\< \label{eq:log-sigma-m-density} \log p(y_{1:m}) = \varphi(\overline{y}_{1:m};m^{-1/2}\sigma) + \frac{m}{2\sigma^2}\Bigl(\Vert \overline{y}_{1:m} \Vert^2- \frac{1}{m} \sum_{t=1}^{m} \Vert y_t \Vert^2 \Bigr)+\cst,\>
where $$ \varphi(y; \sigma) \coloneqq \log \mathbb{E}_{x \sim \mathcal{N}(y,\sigma^2 I)}[e^{-f(x)}].$$

The calculation is straightforward by grouping the sums of squares in \eqref{eq:sigma-m-density}:
$$ -\sum_{t=1}^m \Vert x-y_t\Vert^2= m\Bigl(-\Vert x-\overline{y}_{1:m} \Vert^2+\Vert\overline{y}_{1:m}\Vert^2- \frac{1}{m} \sum_{t=1}^{m} \Vert y_t \Vert^2\Bigr)+\cst.$$
In addition, the Bayes estimator of $X$  given $Y_{1:m}=y_{1:m}$ simplifies as follows (see \autoref{sec:app:universal}):
\< \label{eq:xhat}
\mathbb{E}[X|y_{1:m}] = \overline{y}_{1:m} + m^{-1} \sigma^2  \nabla \varphi(\overline{y}_{1:m};m^{-1/2}\sigma) .
 \>
These calculations bring out a notion of universality class that is associated with $p(y_{1:m})$ formalized by the following definition and proposition.

\begin{definition}[Universality Class]\label{def:universality} We define the universality class $[\tilde{\sigma}]$ as the set of densities $p(y_{1:m})$, in the family of $(\sigma, m)$-densities,
such that for all $(\sigma,m) \in [\tilde{\sigma}]$ the following holds: $m^{-1/2} \sigma  =\tilde{\sigma}.$  
\end{definition}


\begin{proposition} \label{prop:universal} If $Y_{1:m} \sim p(y_{1:m})$, let $\hat{p}_{\sigma,m}$ be the distribution of $\mathbb{E}[X|Y_{1:m}]$, and define $\hat{p}_{\sigma}=\hat{p}_{\sigma,1}$. Then $\hat{p}_{\sigma,m}=\hat{p}_{m^{-1/2}\sigma}$. In other words, $\hat{p}_{\sigma,m}$ is identical for  densities in the same universality class.
\end{proposition}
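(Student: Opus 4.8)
The plan is to reduce the statement to two facts: (i) the Bayes estimator $\mathbb{E}[X\mid Y_{1:m}]$ is a fixed function of the empirical mean $\overline{Y}_{1:m}$ that depends on $(\sigma,m)$ only through $\tilde{\sigma}\coloneqq m^{-1/2}\sigma$, and (ii) the law of $\overline{Y}_{1:m}$ coincides with the law of a single measurement at noise level $\tilde{\sigma}$. First I would revisit \eqref{eq:xhat}. Writing $\sigma^2 = m\tilde{\sigma}^2$, the prefactor $m^{-1}\sigma^2$ collapses to $\tilde{\sigma}^2$, so that
\[
\mathbb{E}[X\mid y_{1:m}] = \overline{y}_{1:m} + \tilde{\sigma}^2\, \nabla\varphi(\overline{y}_{1:m};\tilde{\sigma}) \eqqcolon g_{\tilde{\sigma}}(\overline{y}_{1:m}),
\]
where the map $g_{\tilde{\sigma}}$ no longer refers to $\sigma$ or $m$ individually. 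Specializing the same formula to $m=1$ and $\sigma=\tilde{\sigma}$ shows that $g_{\tilde{\sigma}}$ is precisely the single-measurement Bayes estimator of the $(\tilde{\sigma},1)$-density: $\mathbb{E}[X\mid y_1] = g_{\tilde{\sigma}}(y_1)$.

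The second observation is that both distributions in question are pushforwards under this same measurable map. Concretely, $\hat{p}_{\sigma,m}$ is the law of $g_{\tilde{\sigma}}(\overline{Y}_{1:m})$, while $\hat{p}_{\tilde{\sigma}}=\hat{p}_{\tilde{\sigma},1}$ is the law of $g_{\tilde{\sigma}}(Y_1)$ with $Y_1$ a single measurement at noise level $\tilde{\sigma}$. It therefore suffices to prove the distributional identity $\overline{Y}_{1:m}\overset{d}{=}Y_1$, since pushing forward equal-in-distribution inputs by a common map yields equal output laws, giving $\hat{p}_{\sigma,m}=\hat{p}_{\tilde{\sigma}}$.

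Third, I would establish that identity directly from the generative model $Y_t = X + N_t$ with $N_t\sim\mathcal{N}(0,\sigma^2 I)$ independent and independent of $X$. Averaging gives $\overline{Y}_{1:m} = X + \tfrac{1}{m}\sum_{t=1}^m N_t$, and the averaged noise is Gaussian with covariance $m^{-2}\cdot m\sigma^2 I = m^{-1}\sigma^2 I = \tilde{\sigma}^2 I$. Hence $\overline{Y}_{1:m} = X + \bar{N}$ with $\bar{N}\sim\mathcal{N}(0,\tilde{\sigma}^2 I)$ independent of $X$, which is exactly the law of a single measurement $Y_1 = X + N$ at noise level $\tilde{\sigma}$, closing the chain.

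The argument is essentially a change of variables combined with the stability of Gaussians under averaging, so I do not anticipate a serious analytic obstacle. The one point deserving care—and the true crux—is the factorization recorded in the first step: the estimator's dependence on the data passes entirely through $\overline{y}_{1:m}$, while its dependence on $(\sigma,m)$ passes entirely through $\tilde{\sigma}$. This is what lets a single map $g_{\tilde{\sigma}}$ simultaneously serve the $(\sigma,m)$ and the $(\tilde{\sigma},1)$ problems; everything thereafter is bookkeeping about pushforwards under a common map.
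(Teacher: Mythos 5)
Your proposal is correct and follows essentially the same route as the paper: both use \eqref{eq:xhat} to factor the estimator as a single map $g_{\tilde\sigma}$ of the empirical mean depending on $(\sigma,m)$ only through $\tilde\sigma=m^{-1/2}\sigma$, and both observe that $\overline{Y}_{1:m}=X+\tilde\varepsilon$ with $\tilde\varepsilon\sim\mathcal{N}(0,\tilde\sigma^2 I)$ matches the single-measurement model at noise level $\tilde\sigma$. Your version merely makes the pushforward bookkeeping more explicit than the paper's terser statement that $\mathbb{E}[X|Y_{1:m}]$ is distributed as $X+\tilde\varepsilon+\tilde\sigma^2\,\nabla\varphi(X+\tilde\varepsilon;\tilde\sigma)$.
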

\begin{proof}
We are given $X \sim e^{-f(x)}$, $Y_t = X + \varepsilon_t$, $\varepsilon_t \sim \mathcal{N}(0,\sigma^2 I)$ independently for $t\in [m]$. It follows $\overline{Y}_{1:m} = X + \tilde{\varepsilon}$, where $\tilde{\varepsilon} \sim \mathcal{N}(0,\tilde{\sigma}^2  I)$, where $\tilde{\sigma}^2=m^{-1}\sigma^2$.  Using \eqref{eq:xhat}, $\mathbb{E}[X|y_{1:m}]$ is distributed as 
 $$ X + \tilde{\varepsilon} + \tilde{\sigma}^2  \nabla \varphi(X + \tilde{\varepsilon}; \tilde{\sigma} ), $$
 which is identical for all densities $p(y_{1:m})$ in $[\tilde{\sigma}]$.
\end{proof}
\subsection{Distribution of $\mathbb{E}[X|y
_{1:m}]$ vs. $p_X$: upper bound on the 2-Wasserstein distance} \label{sec:wasserstein}
Our goal is to obtain samples from $p_X$, but in walk-jump sampling the samples are given by $\mathbb{E}[X|y_{1:m}]$, where $y_{1:m} \sim p(y_{1:m})$~\citep{saremi2019neural, saremi2022multimeasurement}. Next, we address how far  $\myhat{p}_{\sigma,m}$ is from the density of interest $p_X$.

\begin{proposition} \label{prop:p-vs-phat} The squared 2-Wasserstein distance between $p_X$ and $\myhat{p}_{\sigma,m}$ is bounded by
$$ W_2(p_X,\myhat{p}_{\sigma,m})^2 \myleq   \frac{\sigma^2}{m} d.$$
\end{proposition} 
The proof is given in \autoref{app:score-hessian}. As expected, the upper bound is expressed in terms of $\tilde{\sigma}^2 = \sigma^2 / m.$ A close inspection of the proof shows that the bound above is loose as it is obtained from the rate resulting from  replacing the empirical Bayes estimator $\mathbb{E}[X|y_{1:m}]$ with the empirical mean $\overline{Y}_{1:m}$. Note, however, that when the prior $p(x)$ is ``strong'' (e.g., low entropy), the dependence on $\sigma^2/m$ can be significantly improved.

\section{The geometry of $(\sigma,m)$-densities} \label{sec:geometry}
 In this section we analyze at the problem of sampling from $p(y_{1:m})$ where we consider $p(x)$ to be an anisotropic Gaussian, $X \sim \mathcal{N}(0,C)$, with a diagonal covariance matrix: \< \label{eq:elliptical-gaussian} C= \diag(\tau_1^2,\dots,\tau_d^2).\> The density $p_X$ is strongly log-concave with the property $\tau_{\max}^{-2} I \mypreceq \nabla^2 f(x) \mypreceq \tau_{\min}^{-2} I$, therefore its condition number is $\kappa=\tau_{\min}^{-2}\tau_{\max}^2$. Log-concave densities with $\kappa \gg 1$ are considered ill-conditioned. Since $Y_1 \sim \mathcal{N}(0, C+\sigma^2 I)$, the condition number for (single-measurement) smoothed density, which we denote by $\kappa_{\sigma,1}$ is given by:
%
\<\label{eq:kappa1} \kappa_{\sigma,1}=(1+\sigma^{-2} \tau_{\max}^2)/ (1+\sigma^{-2} \tau_{\min}^2).\> 
Next, we give the full spectrum of the precision matrix associated with $(\sigma,m)$-densities.
\begin{proposition} \label{prop:allatonce} Consider an anisotropic Gaussian density $X \sim \mathcal{N}(0,C)$ in $\mathbb{R}^d$, where $C_{ij} = \tau_i^2 \delta_{ij}$. Then the $(\sigma,m)$-density is a centered Gaussian in $\mathbb{R}^{md}$: $ Y_{1:m} \sim \mathcal{N}(0,F_{\sigma,m}^{-1})$. For $m \mygeq 2$, the precision matrix $F_{\sigma,m}$ is block diagonal with $d$ blocks (indexed by $i$) of size $m\times m$, each with the following spectrum: (i) There are $m-1$ degenerate eigenvalues equal to $\sigma^{-2},$ (ii)
The remaining eigenvalue equals to $(\sigma^2+m \tau_i^2)^{-1}.$ The condition number $\kappa_{\sigma,m}$ associated with the $(\sigma,m)$-density is given by:
$$ \kappa_{\sigma,m} =  \frac{\lambda_{\rm max}(F_{\sigma,m})}{\lambda_{\rm min}(F_{\sigma,m})}=1+m\cdot\sigma^{-2} \tau_{\rm max}^2. $$
\end{proposition}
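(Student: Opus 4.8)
The plan is to exploit the Gaussianity of the entire construction and to reduce the $md \times md$ problem to $d$ decoupled $m \times m$ problems, the decoupling being possible precisely because $C$ is diagonal. First I would note that $Y_{1:m}$ is a linear image of the jointly Gaussian vector $(X,N_1,\dots,N_m)$ and hence is itself a centered Gaussian; it therefore suffices to compute its covariance and invert. A one-line covariance calculation gives $\cov(Y_s,Y_t)=C+\sigma^2 I\,\delta_{st}$, so the covariance $\Sigma$ of $Y_{1:m}$ has the Kronecker form $\Sigma=\sigma^2 (I_m\otimes I_d)+(\mathbf{1}_m\mathbf{1}_m^\top)\otimes C$, an $md\times md$ matrix whose $(s,t)$ block of size $d$ equals $C+\sigma^2 I\,\delta_{st}$.

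The key structural step is to reorder the coordinates by dimension rather than by measurement, i.e.\ to group the scalars $((Y_1)_i,\dots,(Y_m)_i)$ for each fixed $i\in[d]$. Because $C=\diag(\tau_1^2,\dots,\tau_d^2)$ is diagonal, distinct dimensions are uncorrelated, and this permutation block-diagonalizes $\Sigma$ into $d$ blocks of size $m$, the $i$-th being $\Sigma_i=\sigma^2 I_m+\tau_i^2\,\mathbf{1}_m\mathbf{1}_m^\top$. This decoupling is the crux of the argument: it converts an anisotropic $md$-dimensional covariance into $d$ independent scalar-plus-rank-one matrices.

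Each $\Sigma_i$ is then diagonalized using the spectrum of the all-ones matrix $\mathbf{1}_m\mathbf{1}_m^\top$, which has eigenvalue $m$ on the span of $\mathbf{1}_m$ and $0$ on its $(m-1)$-dimensional orthogonal complement. Hence $\Sigma_i$ carries the eigenvalue $\sigma^2+m\tau_i^2$ once and $\sigma^2$ with multiplicity $m-1$. Inverting, $F_{\sigma,m}=\Sigma^{-1}$ is block diagonal in the same reordered basis, its $i$-th block $\Sigma_i^{-1}$ having eigenvalue $(\sigma^2+m\tau_i^2)^{-1}$ once and $\sigma^{-2}$ with multiplicity $m-1$, which is exactly the claimed per-block spectrum.

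Finally I would collect the global spectrum over all $d$ blocks and read off the extremes. Since $\tau_i^2\mygeq 0$, every eigenvalue $(\sigma^2+m\tau_i^2)^{-1}$ is at most $\sigma^{-2}$, so $\lambda_{\max}(F_{\sigma,m})=\sigma^{-2}$; here the hypothesis $m\mygeq 2$ is essential, as it guarantees $m-1\mygeq 1$ and hence that the eigenvalue $\sigma^{-2}$ actually occurs. The minimum is attained at the largest variance, giving $\lambda_{\min}(F_{\sigma,m})=(\sigma^2+m\tau_{\max}^2)^{-1}$, and the ratio collapses to $1+m\sigma^{-2}\tau_{\max}^2$. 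The calculation is essentially routine once the diagonal structure of $C$ is used to decouple the dimensions; the only place requiring a little care is the bookkeeping of which eigenvalue attains the maximum and which the minimum, together with verifying the role of the assumption $m\mygeq 2$ (which distinguishes this result from the single-measurement condition number $\kappa_{\sigma,1}$).
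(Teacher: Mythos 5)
Your proof is correct, and it takes a recognizably different route from the paper's. The paper never touches the covariance matrix: it derives the precision matrix $F_{\sigma,m}$ directly by completing the square in the defining integral $\int \bigl(\prod_{t}\mathcal{N}(x;y_t,\sigma^2 I)\bigr)\mathcal{N}(x;0,C)\,dx$ coordinate-by-coordinate, obtaining $\log p(y_{1:m})$ as an explicit quadratic form whose $i$-th $m\times m$ block is $\sigma^{-2}\bigl((1-A_{mi})I_m + A_{mi}(I_m - 1_m 1_m^\top)\bigr)$ with $A_{mi}=(m+\sigma^2\tau_i^{-2})^{-1}$, and then diagonalizes by exhibiting the eigenvectors $(1,-1,0,\dots,0)^\top,\dots$ and $1_m$. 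You instead work on the covariance side: the one-line computation $\mathrm{cov}(Y_s,Y_t)=C+\sigma^2\delta_{st}I$ from the additive-noise model, the dimension-major permutation giving $\Sigma_i=\sigma^2 I_m+\tau_i^2\, 1_m 1_m^\top$, the spectrum of the all-ones matrix, and inversion. The two arguments meet at the same identity-plus-rank-one spectral analysis, but yours is shorter and avoids the Gaussian integral entirely (note that $\sigma^{-2}(1-mA_{mi})=(\sigma^2+m\tau_i^2)^{-1}$, so the spectra agree). What the paper's heavier computation buys is the explicit formula for $\log p(y_{1:t})$ as a quadratic form in the $y$'s, which is reused verbatim in the proof of Proposition~4 to extract the conditional densities $p(y_t|y_{1:t-1})$ via $\log p(y_t|y_{1:t-1})=\log p(y_{1:t})-\log p(y_{1:t-1})$; your covariance route would require a separate conditioning computation there. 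Your observation about the role of $m\mygeq 2$ (guaranteeing the eigenvalue $\sigma^{-2}$ occurs, and distinguishing $\kappa_{\sigma,m}$ from $\kappa_{\sigma,1}$ in \eqref{eq:kappa1}) is accurate and is left implicit in the paper.
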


%
\begin{remark}[The curse of sampling all measurements at once] \label{remark:dont}  The above proposition is a negative result regarding sampling from $p(y_{1:m})$ if\textemdash this is an important ``if''\textemdash all $m$ measurements $y_{1:m}$ are sampled in parallel (at the same time). This is because $m \sigma^{-2} = \tilde{\sigma}^{-2}$ remains constant for $m>1$ for $\sigmam \in [\tilde{\sigma}]$\textemdash even worse, the condition number $\kappa_{\sigma,m}$ is strictly greater than $\kappa_{\tilde{\sigma},1}$ for $m>1$.
\end{remark}

This negative result regarding the sampling scheme by~\cite{saremi2022multimeasurement}, we call \emph{all (measurements) at once} (AAO), leads to our investigation below into sampling from $p(y_{1:m})$  sequentially, \emph{one (measurement) at a time} (OAT), by accumulating measurements using the factorization  \eqref{eq:oneatatime}. Now, we perform the analysis in \autoref{prop:allatonce} for the spectrum of the conditional densities in \eqref{eq:oneatatime}.

\begin{proposition} \label{prop:oneatatime} Assume $X \sim \mathcal{N}(0,C)$ is the anisotropic Gaussian in Proposition~\ref{prop:allatonce}. Given the factorization of $p(y_{1:m})$ in \eqref{eq:oneatatime}, for $t > 1$, the conditional density $p(y_t|y_{1:t-1})$ is a Gaussian with a shifted mean, and with a diagonal covariance matrix:
$$ 
    -2\sigma^2 \log p(y_t|y_{1:t-1}) = \sum_{i=1}^d \left(1-\tsigmatau_{ti}\right) \cdot \Bigl(y_{ti}-\frac{\tsigmatau_{ti} }{1-\tsigmatau_{ti}} \sum_{k=1}^{t-1} y_{ki} \Bigr)^2   + \cst,
$$
 where $\tsigmatau_{ti}$ is short for
 $\tsigmatau_{ti} = \left(t+\sigma^2 \tau_i^{-2}\right)^{-1}$.
 The precision matrix  associated with  $p(y_t|y_{1:t-1})$, denoted by $F_{t|1:t-1}$, has the following spectrum
$$\sigma^2 \lambda_i(F_{t|1:t-1}) = 1-\left(t+\sigma^2 \tau_i^{-2}\right)^{-1},$$
with the following condition number $$\kappa_{t|1:t-1}= \frac{1 - (t+\sigma^2 \tau_{\min}^{-2})^{-1} }{1 - (t+\sigma^2 \tau_{\max}^{-2})^{-1}}. $$
Lastly, the condition number $\kappa_{t|1:t-1}$  is  monotonically decreasing as $t$ increases (for any $m>1$):
\< \label{eq:anisotropic-monotonicity} 1< \kappa_{m|1:m-1} < \cdots < \kappa_{3|1:2} < \kappa_{2|1} < \kappa_1,\>
where $\kappa_1 \coloneqq \kappa_{\sigma,1}$ is given by \eqref{eq:kappa1}.
\end{proposition}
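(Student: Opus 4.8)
The plan is to exploit the fact that, because $C$ is diagonal and the measurement noise is isotropic, the whole $(\sigma,m)$-density factorizes across the $d$ coordinates: writing $X_i \sim \mathcal{N}(0,\tau_i^2)$ and $Y_{ti} = X_i + N_{ti}$ with $N_{ti}\sim\mathcal{N}(0,\sigma^2)$, the coordinates are mutually independent, so it suffices to analyze a single scalar coordinate $i$ and then assemble a $d\times d$ diagonal precision matrix $F_{t|1:t-1}$, one entry per coordinate. For fixed $i$, the vector $(Y_{1i},\dots,Y_{ti})$ is a centered Gaussian with covariance $\Sigma = \sigma^2 I_t + \tau_i^2\,\mathbf{1}\mathbf{1}^\top$, since $\cov(Y_{ki},Y_{li}) = \tau_i^2 + \sigma^2\delta_{kl}$. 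I would invert $\Sigma$ once and for all by Sherman--Morrison.

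Sherman--Morrison gives $\Sigma^{-1} = \sigma^{-2}(I_t - c\,\mathbf{1}\mathbf{1}^\top)$ with $c = \tau_i^2/(\sigma^2 + t\tau_i^2)$, and a one-line check shows $c = A_{ti}$. The Gaussian conditioning identity (for a centered Gaussian with precision $\Lambda$, the conditional of the last coordinate given the rest has precision $\Lambda_{tt}$ and mean $-\Lambda_{tt}^{-1}\sum_{k<t}\Lambda_{tk}y_{ki}$) then reads everything off at once: the conditional precision is $\Lambda_{tt} = \sigma^{-2}(1-A_{ti})$, i.e.\ $\sigma^2\lambda_i(F_{t|1:t-1}) = 1-A_{ti}$ as claimed, and since the off-diagonal entries are $\Lambda_{tk} = -\sigma^{-2}A_{ti}$, the conditional mean is $\mu = \tfrac{A_{ti}}{1-A_{ti}}\sum_{k<t}y_{ki}$. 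Substituting into $-2\sigma^2\log p(y_t|y_{1:t-1}) = \sigma^2\Lambda_{tt}(y_{ti}-\mu)^2 + \cst$ and summing over $i$ yields the displayed formula. Equivalently, I could run the sequential Bayes update: the posterior $X_i\mid y_{1:t-1}$ is Gaussian with precision $\tau_i^{-2}+(t-1)\sigma^{-2}$, and $Y_{ti}$ adds one more unit of noise, reproducing the same mean coefficient and variance $\sigma^2(\sigma^2+t\tau_i^2)/(\sigma^2+(t-1)\tau_i^2)$.

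For the condition number I would first note that $1-A_{ti} = 1-(t+\sigma^2\tau_i^{-2})^{-1}$ is increasing in $\tau_i$ for every fixed $t$, so the largest eigenvalue is attained at $\tau_{\min}$ and the smallest at $\tau_{\max}$, which gives the stated $\kappa_{t|1:t-1}$. For monotonicity in $t$, set $a = \sigma^2\tau_{\min}^{-2}\mygeq b = \sigma^2\tau_{\max}^{-2}$ and rewrite $\kappa_{t|1:t-1} = \frac{(t+a-1)(t+b)}{(t+a)(t+b-1)} = 1 + \frac{a-b}{(t+a)(t+b-1)}$. Since $(t+a)(t+b-1)$ is positive and strictly increasing in $t$ (as $t+b-1\mygeq b>0$ for $t\mygeq 1$), the fraction strictly decreases to $0$, so $\kappa_{t|1:t-1}$ strictly decreases toward $1$, which is the chain in \eqref{eq:anisotropic-monotonicity}. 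To attach the final link $\kappa_{2|1}<\kappa_1$, I would evaluate the same expression at $t=1$: it equals $\frac{a(1+b)}{b(1+a)} = \frac{\sigma^2+\tau_{\max}^2}{\sigma^2+\tau_{\min}^2} = \kappa_{\sigma,1}$, matching \eqref{eq:kappa1}, so $\kappa_1$ is simply the $t=1$ endpoint of the same decreasing sequence.

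There is no genuinely hard step; the only places demanding care are algebraic: confirming that the Sherman--Morrison constant $c$ coincides with $A_{ti}$, and spotting the rewrite $1+(a-b)/[(t+a)(t+b-1)]$ that makes monotonicity transparent without differentiating a ratio. The one point worth stating explicitly is that the eigenvalue ordering between $\tau_{\min}$ and $\tau_{\max}$ is independent of $t$ (because $1-A_{ti}$ is monotone in $\tau_i$ uniformly in $t$), which is what licenses comparing the whole condition-number sequence term by term.
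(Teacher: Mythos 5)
Your proof is correct, and it differs from the paper's in both halves. For the conditional density, the paper never touches the covariance matrix: it reuses the completed-square expression for the joint $\log p(y_{1:t})$ from the proof of Proposition~\ref{prop:allatonce}, subtracts $\log p(y_{1:t-1})$, and completes the square again in $y_{ti}$ to read off the mean shift and precision. You instead invert the per-coordinate covariance $\sigma^2 I_t + \tau_i^2 \mathbf{1}\mathbf{1}^\top$ by Sherman--Morrison and apply the standard Gaussian conditioning identity for the last coordinate; the two computations land on the same $\Lambda_{tt}=\sigma^{-2}(1-A_{ti})$ and mean coefficient $A_{ti}/(1-A_{ti})$, but yours makes the precision-matrix structure explicit rather than extracting it from the energy. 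For monotonicity the difference is more substantive: the paper analytically continues $\kappa_{t|1:t-1}$ to real $t$ and differentiates the ratio, obtaining $\eta'(t)=\frac{(r-R)(2t+r+R-1)}{(t+r-1)^2(t+R)^2}<0$, whereas your rewrite $\kappa_{t|1:t-1}=1+\frac{a-b}{(t+a)(t+b-1)}$ gives strict decrease (and the limit $\kappa\to 1$) by inspection, with no calculus. Your approach also buys one thing the paper leaves implicit: the endpoint link $\kappa_{2|1}<\kappa_1$ in \eqref{eq:anisotropic-monotonicity} follows in the paper only if one notices that $\eta(1)$ equals $\kappa_{\sigma,1}$, which is never verified there, while you check explicitly that the $t=1$ evaluation reproduces \eqref{eq:kappa1}. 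One trivial slip to fix: you write that $1-A_{ti}$ is \emph{increasing} in $\tau_i$; in fact $A_{ti}=\tau_i^2/(t\tau_i^2+\sigma^2)$ is increasing in $\tau_i$, so $1-A_{ti}$ is \emph{decreasing}---which is exactly what your (correct) conclusion that the largest eigenvalue sits at $\tau_{\min}$ and the smallest at $\tau_{\max}$ requires, so only the word needs changing, not the argument.
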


The proofs for \autoref{prop:allatonce} and \autoref{prop:oneatatime} are given in \autoref{app:gaussian-kappa}.  These two propositions stand in a clear contrast to each other: in the OAT setting of  \autoref{prop:oneatatime}, sampling becomes \emph{easier} by increasing $t$ as one goes through accumulating measurements $y_{1:t}$ sequentially, where in addition $\kappa_1$ can itself be decreased by increasing $\sigma$. Next, we analyse the OAT scheme in more general settings.

\section{Chain of log-concave Markov chains} \label{sec:log-concave}
Can we devise a sampling scheme where we are guaranteed to always sample log-concave densities? This section is devoted to several results in that direction. We start with the following two lemmas.

\begin{lemma} \label{lemma:log-concave} Assume $\forall x \in \mathbb{R}^d$, $\nabla^2 f(x) \mypreceq L I$ and $\Vert \nabla f(x) \Vert \mygeq \mu \Vert x - x_0 \Vert -\Delta$ for some $x_0$. Then, $\forall y \in \mathbb{R}^d$:
$$ \nabla^2 (\log p)(y) \mypreceq \Bigl(-1 + \frac{3Ld}{\mu^2 \sigma^2}+\frac{3 \Delta^2}{\mu^2 \sigma^2}+3\frac{\Vert x_0-y\Vert^2}{\mu^2 \sigma^6}\Bigr) \frac{I}{\sigma^2}.$$
\end{lemma}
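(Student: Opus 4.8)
The plan is to bound the Hessian of $\log p(y)$ for the single-measurement smoothed density ($m=1$), where $p(y) = \int e^{-f(x)} \exp(-\|x-y\|^2/(2\sigma^2))\,dx$ up to normalization. The starting point is the standard empirical Bayes / Tweedie machinery: for Gaussian smoothing, the Hessian of $\log p(y)$ has a clean covariance representation. Writing $p_{Y=y}(x)$ for the posterior density $\propto e^{-f(x)} \exp(-\|x-y\|^2/(2\sigma^2))$, one has $\nabla \log p(y) = \sigma^{-2}(\mathbb{E}[X|y] - y)$ and
\begin{equation*}
\nabla^2 \log p(y) = -\frac{1}{\sigma^2} I + \frac{1}{\sigma^4}\,\mathrm{cov}(X|y),
\end{equation*}
where $\mathrm{cov}(X|y)$ is the posterior covariance. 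So the whole problem reduces to an upper bound on $\lambda_{\max}(\mathrm{cov}(X|y))$: I need to show $\mathrm{cov}(X|y) \preceq \big(\tfrac{3Ld}{\mu^2} + \tfrac{3\Delta^2}{\mu^2} + \tfrac{3\|x_0-y\|^2}{\mu^2\sigma^4}\big) I \cdot \sigma^2$... more precisely, matching the claimed bound, I must establish $\mathrm{tr}\,\mathrm{cov}(X|y)$ or $\lambda_{\max}$ is at most $\sigma^2\big(\tfrac{3Ld + 3\Delta^2}{\mu^2} + \tfrac{3\|x_0-y\|^2}{\mu^2\sigma^4}\big)$.

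The key analytic step is to control the posterior covariance using the two hypotheses. Since $\mathrm{cov}(X|y) \preceq \mathbb{E}[\|X - \mathbb{E}[X|y]\|^2 \mid y]\,I \preceq \mathbb{E}[\|X - y\|^2 \mid y]\, I$ (the conditional mean minimizes mean-squared deviation, but for an operator-norm bound I would instead bound the trace, which dominates $\lambda_{\max}$), it suffices to bound $\mathbb{E}[\|X-y\|^2 \mid y]$. Here is where the gradient growth condition $\|\nabla f(x)\| \geq \mu\|x-x_0\| - \Delta$ enters: it forces $f$ to grow at least quadratically away from $x_0$, so the posterior cannot have heavy tails and $\|X-y\|$ is controlled. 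Concretely, I would relate $\|x-y\|$ to $\|x-x_0\|$ and then to $\|\nabla f(x)\|$ via the growth bound, and estimate $\mathbb{E}[\|\nabla f(X)\|^2 \mid y]$. A natural route is an integration-by-parts (Stein-type) identity for the posterior: because the posterior is $\propto \exp(-f(x) - \|x-y\|^2/(2\sigma^2))$, its score is $-\nabla f(x) - \sigma^{-2}(x-y)$, and integrating $\mathbb{E}[\langle \nabla f(X), X-y\rangle \mid y]$ by parts produces $\sigma^2 \mathbb{E}[\Delta f(X)|y]$-type terms, which the bound $\nabla^2 f \preceq LI$ controls by $Ld$. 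Combining these relations and using the triangle/Young inequality ($\|x-x_0\|^2 \leq 2\|x-y\|^2 + 2\|y-x_0\|^2$, etc.) should let me solve for $\mathbb{E}[\|X-y\|^2|y]$ and collect the three terms $Ld$, $\Delta^2$, and $\|x_0-y\|^2/\sigma^4$, each with its factor of $3/\mu^2$.

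The main obstacle I anticipate is the circularity in bounding $\mathbb{E}[\|X-y\|^2 \mid y]$: the growth condition lower-bounds $\|\nabla f\|$ in terms of $\|x-x_0\|$, but turning this into an \emph{upper} bound on the posterior second moment requires going through the Stein identity, which reintroduces $\mathbb{E}[\|X-y\|^2|y]$ (or $\mathbb{E}[\langle\nabla f, X-y\rangle]$) on both sides. I expect the algebra to yield an inequality of the form $A \leq B + C\sqrt{A}$ or $A \leq B + \tfrac12 A$ after applying Young's inequality to split cross terms, which I then rearrange to isolate $A = \mathbb{E}[\|X-y\|^2|y]$. Getting the constants to come out as exactly $3/\mu^2$ (rather than some larger absolute constant) will require choosing the Young-inequality weights carefully; this constant-chasing, together with correctly passing from the trace bound to the stated operator inequality $\nabla^2 \log p \preceq (\cdots) I/\sigma^2$, is the delicate part, whereas the covariance representation of $\nabla^2\log p$ and the integration by parts are routine.
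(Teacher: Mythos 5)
Your skeleton matches the paper's up to the key step: the covariance representation $\nabla^2(\log p)(y) = -\sigma^{-2}I + \sigma^{-4}\cov(X|y)$, the reduction to a posterior second-moment bound, the score-matching integration by parts, and the $(a+b+c)^2 \myleq 3(a^2+b^2+c^2)$ splitting are all the right ingredients. But the circularity you flag at the end is a genuine gap, not a constant-chasing nuisance, and the fix you propose (rearranging a self-bounding inequality via Young) does not close in general. If you apply the Stein identity to the posterior itself, whose score is $-\nabla f(x) - \sigma^{-2}(x-y)$, you get $\mathbb{E}\bigl[\Vert \nabla f(X) + \sigma^{-2}(X-y)\Vert^2 \,\big|\, y\bigr] = \mathbb{E}[\tr \nabla^2 f(X)\,|\,y] + d\sigma^{-2} \myleq Ld + d\sigma^{-2}$; extracting $A = \mathbb{E}[\Vert X-y\Vert^2\,|\,y]$ from this then yields, after any choice of Young weights, an inequality of the shape $(c_1\mu^2 - c_2\sigma^{-4})A \myleq Ld + d\sigma^{-2} + \Delta^2 + \mu^2\Vert x_0-y\Vert^2$: the term re-entering from the Gaussian localization carries a coefficient of order $\sigma^{-4}$, which is not small relative to $\mu^2$ and cannot be absorbed by tuning weights. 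The rearrangement only goes through when $\mu^2\sigma^4$ is large enough, whereas the lemma carries no such condition, and even in that regime the constants degrade past $3/\mu^2$ and a spurious $d/\sigma^2$ term appears.

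The paper's missing ingredient is a comparison step that removes the quadratic localization \emph{before} integrating by parts. By convexity of the log-partition function $a(\nu)$ of the exponential family $p(x|\nu)\propto\exp\bigl(-f(x) + \sigma^{-2}(x-x_0)^\top(y-x_0) - \frac{\nu}{2\sigma^2}\Vert x-x_0\Vert^2\bigr)$, one has $a'(1)\mygeq a'(0)$, which says exactly that the posterior second moment about $x_0$ is bounded by the same moment under the linearly tilted density $\tilde p(x)\propto e^{-\tilde f(x)}$ with $\tilde f(x) = f(x) - \sigma^{-2}(x-x_0)^\top(y-x_0)$. Because the tilt is linear, the Stein identity under $\tilde p$ gives $\mathbb{E}_{\tilde p}\Vert\nabla\tilde f\Vert^2 = \mathbb{E}_{\tilde p}\,\tr\nabla^2 f \myleq Ld$ with nothing leaking back, and the growth hypothesis gives the pointwise bound $\mu\Vert x-x_0\Vert \myleq \Vert\nabla\tilde f(x)\Vert + \Delta + \sigma^{-2}\Vert x_0-y\Vert$, in which only the \emph{constant} vector $\sigma^{-2}(x_0-y)$ appears rather than $\sigma^{-2}(x-y)$; squaring then delivers the three terms with the exact factor $3/\mu^2$. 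A secondary point: the paper centers the covariance at $x_0$, using $\cov(X|y)=\cov(X-x_0|y)\mypreceq\mathbb{E}[\Vert X-x_0\Vert^2\,|\,y]\,I$, which lets the growth condition plug in directly, without the extra triangle-inequality losses your $\Vert X-y\Vert$ centering would incur.
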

The proof is given in \autoref{app:logconcave}.


\begin{lemma}
\label{lemma:more-logconcave}
Consider the density $p(x)$ associated with the random variable $X$ in $\mathbb{R}^d$ and the $(\sigma,m)$-density given by \eqref{eq:sigma-m-density}. Then in expectation, for any $m \mygeq 1$ the conditional densities become more log-concave upon accumulation of measurements:\footnote{Note that here no assumption is made on the smoothness of $p(x)$.}
$$ \mathbb{E}_{y_1} \nabla^2_{y_1} \log p(y_1) \mysucceq \mathbb{E}_{y_{1:2}}  \nabla^2_{y_2} \log p(y_2|y_1) \mysucceq \cdots  \mysucceq \mathbb{E}_{y_{1:m}} \nabla^2_{y_m} \log p(y_m|y_{1:m-1}). $$
\end{lemma}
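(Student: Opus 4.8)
The plan is to reduce each term $\mathbb{E}_{y_{1:t}}\nabla^2_{y_t}\log p(y_t|y_{1:t-1})$ to a single clean expression involving only the expected posterior covariance (the MMSE matrix) $\mathbb{E}[\cov(X|\overline{Y}_{1:t})]$, and then to deduce the chain of inequalities from the fact that this matrix is monotonically non-increasing in the Loewner order as $t$ grows, because $\overline{Y}_{1:t}$ is an increasingly clean measurement of $X$.

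First I would compute the Hessian. Since $\log p(y_t|y_{1:t-1}) = \log p(y_{1:t}) - \log p(y_{1:t-1})$ and the second term does not depend on $y_t$, it suffices to differentiate the universality identity \eqref{eq:log-sigma-m-density} (with $m=t$) twice in $y_t$. Using $\partial\overline{y}_{1:t}/\partial y_t = t^{-1} I$, the $\varphi$ term contributes $t^{-2}\nabla^2\varphi(\overline{y}_{1:t}; t^{-1/2}\sigma)$, the quadratic $\Vert\overline{y}_{1:t}\Vert^2$ term contributes $t^{-1}\sigma^{-2} I$, and the $-\tfrac1t\sum\Vert y_k\Vert^2$ term contributes $-\sigma^{-2} I$, giving
\[ \nabla^2_{y_t}\log p(y_t|y_{1:t-1}) = \frac{1}{t^2}\nabla^2\varphi(\overline{y}_{1:t}; t^{-1/2}\sigma) - \Bigl(1-\frac1t\Bigr)\frac{I}{\sigma^2}. \]
This is well defined with no smoothness assumption on $f$, since $\varphi(\cdot;\sigma)$ is a Gaussian convolution and hence always smooth, which is the content of the footnote.

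Next I would take the expectation. By the argument in the proof of \autoref{prop:universal}, $\overline{Y}_{1:t} \sim X + \mathcal{N}(0, \tilde\sigma_t^2 I)$ with $\tilde\sigma_t^2 = \sigma^2/t$, so $\overline{Y}_{1:t}$ is distributed exactly as a single smoothed measurement at level $\tilde\sigma_t$, and $\varphi(\cdot;\tilde\sigma_t)$ is the log-density of that measurement up to a constant. The Tweedie/empirical-Bayes identities then give $\nabla^2\varphi(y;\tilde\sigma_t) = \tilde\sigma_t^{-4}\cov(X|y) - \tilde\sigma_t^{-2} I$; taking the expectation over $\overline{Y}_{1:t}$ and substituting $\tilde\sigma_t^2=\sigma^2/t$, the $t$-dependent scalar terms cancel and I obtain the key identity
\[ \mathbb{E}_{y_{1:t}}\nabla^2_{y_t}\log p(y_t|y_{1:t-1}) = \frac{1}{\sigma^4}\,\mathbb{E}\bigl[\cov(X|\overline{Y}_{1:t})\bigr] - \frac{I}{\sigma^2}, \]
valid uniformly for all $t\ge 1$ (the $t=1$ case recovers $\mathbb{E}_{y_1}\nabla^2\log p(y_1)$).

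With this reduction the whole lemma collapses to the monotonicity $\mathbb{E}[\cov(X|\overline{Y}_{1:t+1})] \mypreceq \mathbb{E}[\cov(X|\overline{Y}_{1:t})]$, i.e. the MMSE matrix is non-increasing as the noise level $\sigma^2/t$ decreases. I expect this to be the main obstacle, not because it is hard but because it is the one genuinely information-theoretic step. I would prove it by degradation: since $\sigma^2/t > \sigma^2/(t+1)$, the noisier measurement can be realized as $\overline{Y}_{1:t} \stackrel{d}{=} \overline{Y}_{1:t+1} + \xi$ with $\xi\sim\mathcal{N}(0,(\sigma^2/t-\sigma^2/(t+1))I)$ independent, so $X \to \overline{Y}_{1:t+1} \to \overline{Y}_{1:t}$ is Markov. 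Writing $U=\mathbb{E}[X|\overline{Y}_{1:t+1}]$ and using $\mathbb{E}[X|\overline{Y}_{1:t}]=\mathbb{E}[U|\overline{Y}_{1:t}]$, the error of estimating $X$ from $\overline{Y}_{1:t}$ splits orthogonally into $X-U$ and $U-\mathbb{E}[U|\overline{Y}_{1:t}]$, whence $\mathbb{E}[\cov(X|\overline{Y}_{1:t})] = \mathbb{E}[\cov(X|\overline{Y}_{1:t+1})] + \mathbb{E}[\cov(U|\overline{Y}_{1:t})] \mysucceq \mathbb{E}[\cov(X|\overline{Y}_{1:t+1})]$. Substituting into the key identity gives the claimed chain of Loewner inequalities; the only regularity needed is enough decay to differentiate under the integral in the Tweedie step.
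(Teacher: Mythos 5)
Your proof is correct, and it lands on exactly the same key identity as the paper's proof\textemdash namely $\nabla^2_{y_t}\log p(y_t|y_{1:t-1}) = -\sigma^{-2}I + \sigma^{-4}\cov(X|y_{1:t})$ (your version holds pointwise too, not just in expectation, since the $t$-dependent scalars cancel before you average)\textemdash but both halves are executed differently. For the identity, the paper differentiates the multi-measurement integral representation of $\log p(y_{1:t})$ directly, obtaining the posterior covariance from the $A_t - B_t$ cancellation; you instead differentiate the universality formula \eqref{eq:log-sigma-m-density} and apply single-measurement Tweedie to the sufficient statistic $\overline{Y}_{1:t} \sim X + \mathcal{N}(0,(\sigma^2/t)I)$, implicitly using that $\cov(X|y_{1:t}) = \cov(X|\overline{y}_{1:t})$ (which is legitimate, as the posterior depends on $y_{1:t}$ only through $\overline{y}_{1:t}$). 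For the monotonicity step, the paper disposes of $\mathbb{E}\,\cov(X|Y_{1:t+1}) \mypreceq \mathbb{E}\,\cov(X|Y_{1:t})$ in one sentence by the law of total covariance (conditioning on the extra measurement $Y_{t+1}$ can only shrink the expected posterior covariance), whereas you reframe it as MMSE monotonicity along a single Gaussian channel with decreasing noise $\sigma^2/t$ and prove it via degradation, $X \to \overline{Y}_{1:t+1} \to \overline{Y}_{1:t+1}+\xi$, with an explicit orthogonal error decomposition. Your orthogonality argument is in fact the conditional law of total covariance in disguise, so the mathematics is equivalent, but your route buys a self-contained proof of the step the paper merely asserts, and the channel-degradation viewpoint (a data-processing statement about a single smoothed measurement at noise level $\sigma/\sqrt{t}$) is a clean alternative lens on why accumulation helps; the paper's route is shorter and avoids introducing the coupling.
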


\begin{proof} The full proof of the lemma is given in \autoref{app:logconcave}, where we derive the following: $$
    \nabla^2_{y_m} \log p(y_m|y_{1:m-1}) =   - \sigma^{-2} I + \sigma^{-4} \cov(X|y_{1:m}).
$$
The proof follows through since due to the law of total covariance the mean of the posterior covariance $\mathbb{E}_{y_{1:m}}\cov(X|y_{1:m})$ can only go down upon accumulation of measurements.
\end{proof}

These two lemmas paint an intuitive picture that we expand on in the remainder of this section: (i) by increasing $\sigma$ we can transform a density to be strongly log-concave (\autoref{lemma:log-concave}) which we can sample our first measurement from, (ii) and by accumulation of measurements we expect sampling to become easier, where in \autoref{lemma:more-logconcave} this is formalized by showing that on average the conditional densities become more log-concave by conditioning on previous measurements. Next, we generalize these results with our main theorem, followed by an example on a mixture of Gaussians.
\begin{theorem}[Once log-concave, always log-concave] \label{theorem:compact} Consider $Z$ to be a random variable in $\mathbb{R}^d$ with a compact support, i.e., almost surely $\Vert Z \Vert^2 \myleq R^2$, and take $X=Z+N_0$, $N_0 \sim \mathcal{N}(0,\tau^2 I)$.  Then, for any $m\mygeq 1$, the conditional Hessian is upper bounded \<
\label{eq:compact:upper-bound} \nabla^2_{y_m} \log p(y_m|y_{1:m-1}) \mypreceq \zeta(m) I,\> where:
\< \label{eq:compact:zeta} \zeta(m) = \frac{1}{\sigma^2}\Bigl(\frac{\tau^2}{m \tau^2+\sigma^2}-1\Bigr)+\frac{R^2}{(m \tau^2+\sigma^2)^2}\>
is a decreasing function of $m$, in particular:
$$\zeta'(m) = -\frac{\tau^2(2 R^2 \sigma^2+\sigma^2 \tau^2+m\tau^4)}{\sigma^2(\sigma^2+m\tau^2)^3} \myleq 0.$$
As a corollary, $p(y_1)$ associated with  $Y_1=X+N_1$, $N_1 \sim \mathcal{N}(0,\sigma^2 I)$ is strongly log-concave if \< \label{eq:compact:log-concave} \sigma^2> R^2-\tau^2,\>
and stays strongly log-concave upon accumulation of measurements.
\end{theorem}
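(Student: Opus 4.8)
The plan is to reduce the entire theorem to a single upper bound on the posterior covariance $\cov(X|y_{1:m})$. Lemma~\ref{lemma:more-logconcave} already supplies the exact identity
$$\nabla^2_{y_m} \log p(y_m|y_{1:m-1}) = -\sigma^{-2} I + \sigma^{-4}\, \cov(X|y_{1:m}),$$
so it suffices to show $\cov(X|y_{1:m}) \mypreceq c_m I$ with exactly the right scalar $c_m$; substituting $c_m$ into this identity will collapse to $\zeta(m)$. Everything therefore hinges on bounding the posterior covariance of $X$.

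To control $\cov(X|y_{1:m})$ I would exploit the two-layer structure $Z \to X \to Y_{1:m}$ through the law of total covariance, conditioning on the latent $Z$:
$$\cov(X|y_{1:m}) = \mathbb{E}_{Z|y_{1:m}}\bigl[\cov(X|Z,y_{1:m})\bigr] + \cov_{Z|y_{1:m}}\bigl(\mathbb{E}[X|Z,y_{1:m}]\bigr).$$
The reason for conditioning on $Z$ is that, given $Z=z$, the model $X|z \sim \mathcal{N}(z,\tau^2 I)$ with $Y_t|x \sim \mathcal{N}(x,\sigma^2 I)$ is fully Gaussian, so both conditional moments are closed-form. A short precision-matrix computation (prior precision $\tau^{-2}I$ plus $m$ likelihood contributions $\sigma^{-2}I$) gives $\cov(X|z,y_{1:m}) = \tfrac{\tau^2\sigma^2}{\sigma^2+m\tau^2} I$, which is crucially independent of $z$, so the first term equals exactly this matrix. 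The conditional mean $\mathbb{E}[X|z,y_{1:m}] = \tfrac{\sigma^2 z + m\tau^2 \overline{y}_{1:m}}{\sigma^2 + m\tau^2}$ is affine in $z$ with slope $\tfrac{\sigma^2}{\sigma^2+m\tau^2}$, so the second term is $\bigl(\tfrac{\sigma^2}{\sigma^2+m\tau^2}\bigr)^2 \cov(Z|y_{1:m})$.

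The compact support enters only at this point: since $\Vert Z \Vert \myleq R$ almost surely, for every unit vector $v$ we have $v^\top \cov(Z|y_{1:m}) v \myleq \mathbb{E}[\Vert Z\Vert^2 \mid y_{1:m}] \myleq R^2$, hence $\cov(Z|y_{1:m}) \mypreceq R^2 I$. Combining the two terms yields $\cov(X|y_{1:m}) \mypreceq \bigl(\tfrac{\tau^2\sigma^2}{\sigma^2+m\tau^2} + \tfrac{\sigma^4 R^2}{(\sigma^2+m\tau^2)^2}\bigr) I$, and feeding this scalar into the Lemma~\ref{lemma:more-logconcave} identity reproduces precisely \eqref{eq:compact:zeta}. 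The monotonicity is then a one-line calculus check: differentiating $\zeta$ with respect to $m$ (via $u = m\tau^2+\sigma^2$) gives the stated $\zeta'(m) \myleq 0$, since every term in its numerator is nonnegative. For the corollary I would simply evaluate $\zeta(1) = \tfrac{R^2-\tau^2-\sigma^2}{(\tau^2+\sigma^2)^2}$, which is negative exactly when $\sigma^2 > R^2-\tau^2$, so $p(y_1)$ is strongly log-concave under \eqref{eq:compact:log-concave}; because $\zeta$ is decreasing this persists for all $m$.

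I expect the main obstacle to be the bookkeeping in the law-of-total-covariance step: the key structural observation is that conditioning on the latent $Z$ linearizes the problem, making $\cov(X|z,y_{1:m})$ constant in $z$ and $\mathbb{E}[X|z,y_{1:m}]$ affine in $z$, so that the only non-Gaussian ingredient ($\cov(Z|y_{1:m})$) is cleanly isolated and bounded by compact support. Once that decomposition is set up correctly, the remainder is routine Gaussian algebra and a single derivative.
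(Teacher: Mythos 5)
Your proof is correct, and it reaches the paper's key identity by a genuinely different route. The paper's proof works directly on the marginalized density: it integrates out $x$ to write $p(y_{1:m}|z)$ as an explicit Gaussian in $\overline{y}_{1:m}$, expresses the score $\nabla_{y_m}\log p(y_{1:m})$ in terms of $\mathbb{E}[Z|y_{1:m}]$, and then differentiates once more, computing $\nabla_{y_m}\mathbb{E}[Z|y_{1:m}]=\frac{1}{m\tau^2+\sigma^2}\cov(Z|y_{1:m})$ by hand, to land on $\nabla^2_{y_m}\log p(y_{1:m})=\frac{1}{\sigma^2}\bigl(\frac{\tau^2}{m\tau^2+\sigma^2}-1\bigr)I+\frac{1}{(m\tau^2+\sigma^2)^2}\cov(Z|y_{1:m})$, after which the compactness bound $\cov(Z|y_{1:m})\mypreceq R^2 I$ finishes as in your argument. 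You instead start from the Hessian identity of \autoref{lemma:more-logconcave}, $\nabla^2_{y_m}\log p(y_m|y_{1:m-1})=-\sigma^{-2}I+\sigma^{-4}\cov(X|y_{1:m})$, and decompose $\cov(X|y_{1:m})$ by the law of total covariance over the latent $Z$, exploiting Gaussian conjugacy (constant-in-$z$ conditional covariance $\frac{\tau^2\sigma^2}{\sigma^2+m\tau^2}I$ and affine conditional mean with slope $\frac{\sigma^2}{\sigma^2+m\tau^2}$); your resulting relation $\cov(X|y_{1:m})=\frac{\sigma^2\tau^2}{m\tau^2+\sigma^2}I+\bigl(\frac{\sigma^2}{m\tau^2+\sigma^2}\bigr)^2\cov(Z|y_{1:m})$ is exactly the equation the paper records in a footnote and explicitly flags as \emph{not} required for its proof\textemdash there it is obtained a posteriori by equating the two Hessian formulas, whereas you make it the engine of the argument. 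Your route buys conceptual clarity (conditioning on $Z$ linearizes everything, so the single non-Gaussian object $\cov(Z|y_{1:m})$ is isolated before any bound is applied) and reuses an identity already proven for \autoref{lemma:more-logconcave}; the paper's direct differentiation is self-contained and additionally produces the closed-form score $\nabla_{y_m}\log p(y_{1:m})$, which is of independent algorithmic use. Your calculus for $\zeta'(m)$ and the evaluation $\zeta(1)=\frac{R^2-\tau^2-\sigma^2}{(\tau^2+\sigma^2)^2}$ for the corollary match the paper exactly.
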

\begin{proof} The full proof is given in \autoref{app:logconcave} and it is a direct consequence of the following identity:
$$ \nabla^2_{y_m} \log p (y_m|y_{1:m-1}) = \frac{1}{\sigma^2}\Bigl(\frac{\tau^2}{m\tau^2+\sigma^2}-1\Bigr) \cdot I + \frac{1}{(m\tau^2+\sigma^2)^2} \cov(Z|y_{1:m}), $$
which we derive, combined with $\cov(Z|y_{1:m}) \mypreceq R^2 I$ due to our compactness assumption. 
\end{proof}

\begin{remark} \autoref{theorem:compact} spans a broad class of sampling problems, especially since $\tau$ can in principle be set to zero. The only property we loose in the setting of $\tau=0$ is that the upper bound $\zeta(m) I$ does not monotonically go down  as measurements are accumulated. 
\end{remark}


\begin{figure}[t!]
\begin{center}
\begin{subfigure}[$\sigma=2$]
{\includegraphics[width=0.33\textwidth]{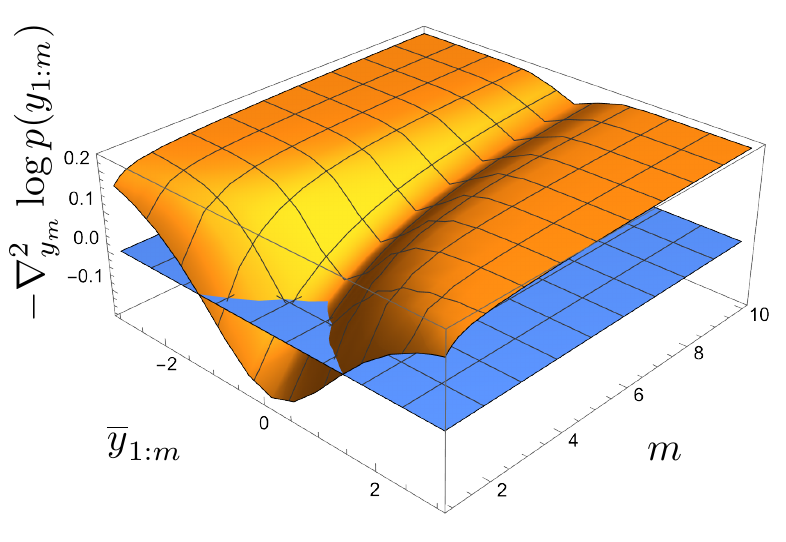}}    
\end{subfigure}
\begin{subfigure}[$\sigma=2\sqrt{2}$]
{\includegraphics[width=0.33\textwidth]{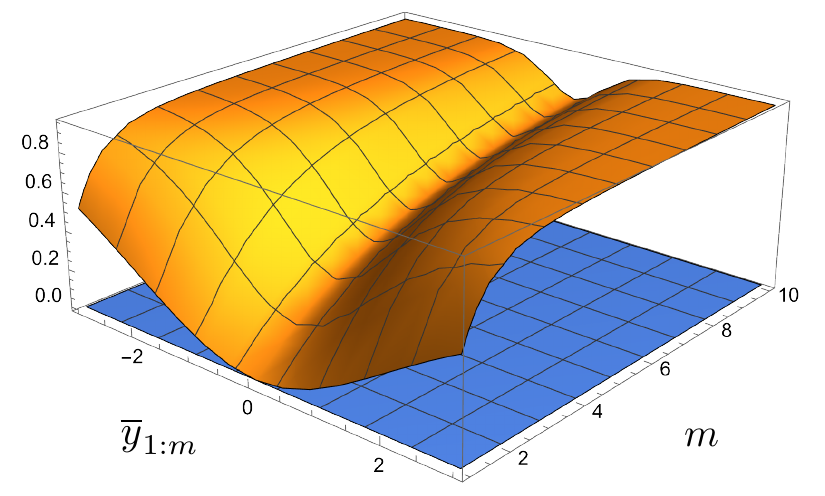}}  
\end{subfigure}
\begin{subfigure}[$\sigma^2 \zeta(m)$ vs. $m$]
{\includegraphics[width=0.31\textwidth]{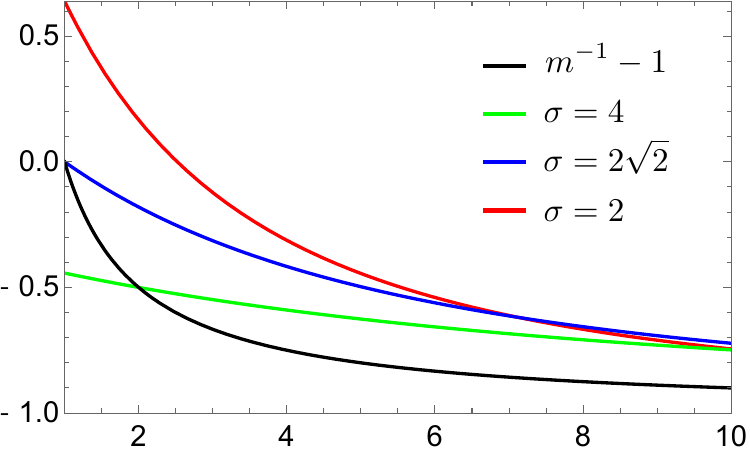}}   
\end{subfigure}
\end{center}
\vspace*{-.2cm}

\caption{\label{fig:hessian} (a,b) The negative conditional Hessian  for two values of $\sigma$ are plotted as a function of $\overline{y}_{1:m}$ and $m$ assuming $X$ is distributed according to \eqref{eq:two-mixture} in 1D, where we set $\mu=3$, $\tau=1$ (see \eqref{eq:hess-general}).  (c) The upper bound in \eqref{eq:compact:upper-bound} is sharp for this example; $\sigma^2 \zeta(m)$ is plotted vs. $m$ for different $\sigma$. }
\end{figure}

\subsection{Example: Mixture of two Gaussians} \label{example:mog}
In this section we examine \autoref{theorem:compact} by studying the following mixture of Gaussians for $\alpha=1/2$:
\< \label{eq:two-mixture}
p(x) = \alpha\,\mathcal{N}(x;\mu,\tau^2 I)+ (1-\alpha)\, \mathcal{N}(x;-\mu,\tau^2 I).
\>
This is an instance of the setup in \autoref{theorem:compact}, where $p(z) = \frac{1}{2} \delta(z-\mu)+ \frac{1}{2} \delta(z+\mu)$, and $R^2 = \mu^\top \mu$.
By differentiating \eqref{eq:log-sigma-m-density} twice we arrive at  the following expression for $\nabla_{y_m}^2 \log p(y_m|y_{1:m-1})$:
\< \label{eq:hess-general}
 \nabla_{y_m}^2 \log p(y_m|y_{1:m-1}) =  \nabla_{y_m}^2 \log p(y_{1:m}) = \sigma^{-2} (m^{-1}-1) I + m^{-2} H(\overline{y}_{1:m}; m^{-1/2} \sigma),\>
where $$ H(y;\sigma) \coloneqq \nabla^2 \log \mathbb{E}_{X \sim \mathcal{N}(y,\sigma^2 I)}[e^{-f(X)}].$$
In \autoref{app:logconcave} we show that for the mixture of Gaussian here, \eqref{eq:two-mixture} with $\alpha=1/2$, we have
\< \label{eq:mog-hess}
\hess(y;\sigma) = \frac{1}{(\sigma^2+\tau^2)} 
  \biggl(-I + \frac{2 \mu \mu^{\top}}{\sigma^2+\tau^2} \cdot \biggl(1+\cosh\Bigl(\frac{2\mu^{\top} y}{\sigma^2+\tau^2}\Bigr)\biggr)^{-1} \biggr), \>
which takes its maximum at $y=0$. By using \eqref{eq:hess-general}, it is then straightforward to show that \eqref{eq:compact:upper-bound}, \eqref{eq:compact:zeta}, and \eqref{eq:compact:log-concave} all hold in this example, with the additional result that the upper bound is now tight. In \autoref{fig:hessian}, these calculations are visualized in 1D for $\mu=3, \tau=1$, and for different values of $\sigma$; in panel (c) we also plot $1-1/m$ which is the large $m$ behavior of $\sigma^2 \zeta(m)$. This can be seen from two different routes: \eqref{eq:compact:zeta} and \eqref{eq:hess-general}.

\begin{remark}[Monotonicity] The monotonic decrease of the upper bound in \autoref{theorem:compact}, together with the monotonicity result in \autoref{lemma:more-logconcave}, may lead one to investigate whether the stronger result
\< \label{eq:monotonicity} \nabla_{y_1}^2 \log p(y_1) \mysucceq \nabla_{y_2}^2 \log p(y_2|y_1) \mysucceq \dots \mysucceq \nabla_{y_m}^2 \log p(y_m|y_{1:m-1}),\>
could hold, e.g., for the mixture of Gaussians we studied here, especially since the upper bound \eqref{eq:compact:upper-bound} is sharp for this example. For \eqref{eq:monotonicity} to hold, $\cov(Z|y_{1:m})$ must be less than $\cov(Z|y_{1:m-1})$. However, we can imagine a scenario where $y_1+\dots+y_{m-1}$ is very large, so that $\cov(Z|y_{1:m}) \approx 0$, while $y_{m}$ is such that $y_1+\dots+y_{m-1}+y_m$ is close to $m \mathbb{E}[Z]$, where $\cov(Z|y_{1:m})$ will be large. 
\end{remark}

\subsection{Algorithm: non-Markovian chain of (log-concave) Markov chains}
Below, we give the pseudo-code for our sampling algorithm. In the inner loop, $\mcmc_\sigma$ is any MCMC method, but our focus in this paper is on Langevin MCMC algorithms\footnote{We experimented with a variety of Langevin MCMC algorithms to sample from $p(y_t|y_{1:t-1})$ in the inner loop of Algorithm~\ref{alg:oat}. The results are reported in the appendix due to space constraints. After extensive tuning, we found the underdamped Langevin MCMC algorithm by~\citet{sachs2017langevin} to be the best performing for the test densities we considered.} that use $\nabla_{y_t} \log p(y_t|y_{1:t-1})$ to sample the new measurement $Y_t$ conditioned on the previously sampled ones $Y_{1:t-1}$.

\label{sec:algorithm}
\begin{algorithm}[H]
\caption{{One-(measurement)-at-a-time} walk-jump sampling referred to by OAT. See \autoref{fig:schematic} for the schematic. A version of $\mcmc_\sigma$  is given in  \autoref{app:detailed_algorithm}.}
 \begin{algorithmic}[1]
   \STATE \textbf{Parameter} noise level $\sigma$
   \STATE \textbf{Input} number of measurements $m$, number of steps for each measurement $n_t$ 
   \STATE \textbf{Output} $\myhat{X}$
   \STATE {Initialize ${\overline Y}_{1:0} = 0$ } \\
   \FOR{$t=[1,\dots,m]$} 
   \STATE {Initialize $Y_t^{(0)}$} \\
   
   \FOR{$i=[1,\dots,n_t]$}
    \STATE $Y_t^{(i)} = {\textrm{MCMC}_\sigma} (Y_t^{(i-1)}, {\overline Y_{1:t-1}})$ \\
     \ENDFOR \\
     \STATE $Y_t = Y_t^{(n_t)}$
     \STATE ${\overline Y}_{1:t} = {\overline Y}_{1:t-1} + (Y_t - {\overline Y}_{1:t-1})/t$    \\  
   \ENDFOR
   \RETURN  $\myhat{X} \leftarrow \mathbb{E}[X|Y_{1:m}]$  according to \eqref{eq:xhat} 
 \end{algorithmic}
 \label{alg:oat}
\end{algorithm}

\subsubsection{Estimating $\nabla \log p(y)$}
So far we have assumed we know the smoothed score function $g(y;\sigma)=\nabla (\log p)(y)=\nabla \varphi(y;\sigma)$, and in experiments below we consider cases where we know $g(y;\sigma)$ in closed form. But in principle, we would like to estimate $g(y;\sigma)$ in terms of the unnormalized $p(x) \propto e^{-f(x)}$. The simplest such ``plug-in'' estimator  is as follows (see \autoref{app:score-hessian}):
\< \label{eq:ghat} \myhat{g}_1(y; \sigma) = \frac{1}{\sigma}\frac{\sum_{i=1}^n \varepsilon_i \exp\left(-f(y+\sigma \varepsilon_i)\right)}{\sum_{i=1}^n \exp\left(-f(y+\sigma \varepsilon_i)\right)},\; \varepsilon_i \sim \mathcal{N}(0,I).\>
Using any estimator for $\nabla (\log p)(y)$, including $\hat{g}_1$ above, we can construct an estimator for $\nabla \log p(y_m|y_{1:m-1})$ using results in \autoref{sec:universal}, see \eqref{eq:log-sigma-m-density}, and \autoref{sec:app:universal}:
\<\label{eq:ghat_oat} \nabla_{y_m} \log p(y_m|y_{1:m-1}) = \nabla_{y_m} \log p(y_{1:m}) \approx \frac{1}{m} \myhat{g}(\overline{y}_{1:m};\frac{\sigma}{\sqrt{m}})+\frac{1}{\sigma^2}(\overline{y}_{1:m}-y_t).  \> 
In the appendix, we also conduct experiments to investigate this aspect of the problem. Studying the covariance of the plug-in estimator and devising better estimators is beyond the scope of this paper.

\section{Experiments} \label{sec:experiments}

We evaluate the performance of Algorithm~\ref{alg:oat} alongside related sampling schemes on carefully designed test densities. We compare the following sampling schemes:
\begin{itemize}
    \item One-at-a-time walk-jump sampling using  Algorithm~\ref{alg:oat} with $m=1000$ (``OAT''),
    \item All-at-once walk-jump sampling  (``AAO''), 
    \item Single-measurement walk-jump sampling (``$m=1$''),
    \item Langevin MCMC by~\cite{sachs2017langevin} without any smoothing (``$\sigma=0$'').
\end{itemize}
The hyperparameters were tuned for each sampling scheme and the total number of iterations was kept fixed. See \autoref{app:experimental_detail} for details.

\begin{figure}[t!]
\begin{center}
\begin{subfigure}[]
    {\includegraphics[width=0.24\textwidth]{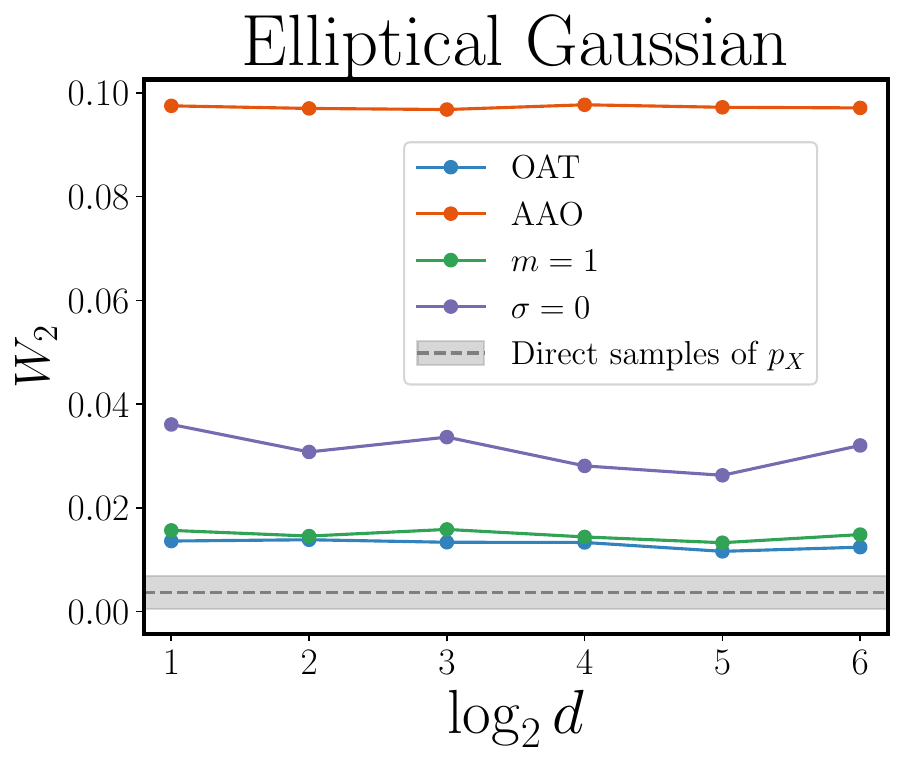}} 
\end{subfigure}
\begin{subfigure}[]
    {\includegraphics[width=0.24\textwidth]{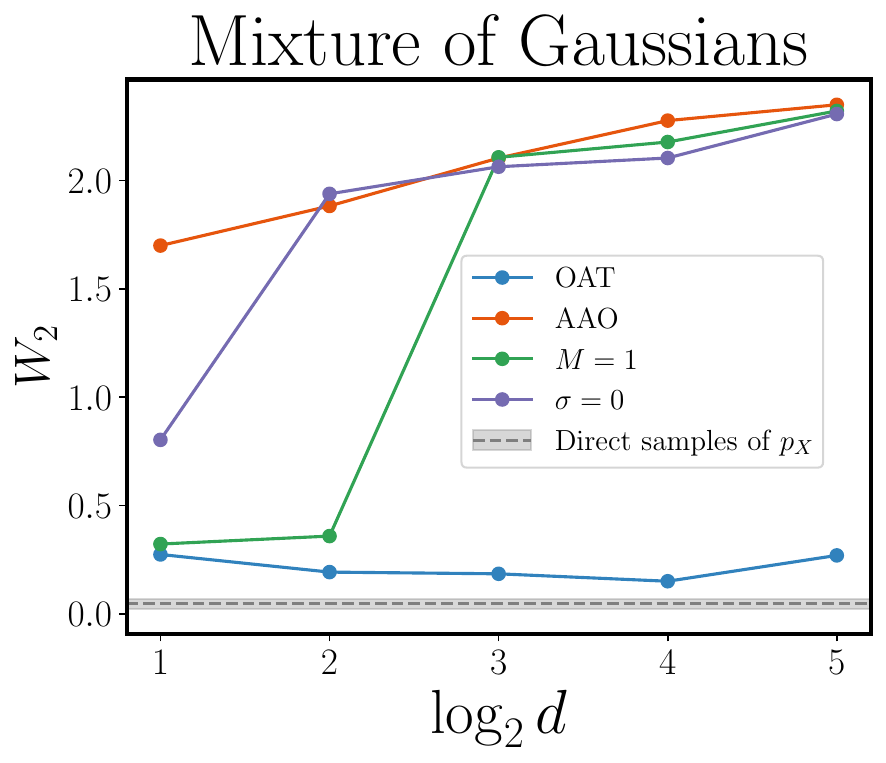}} 
\end{subfigure}
\begin{subfigure}[]
    {\includegraphics[width=0.24\textwidth]{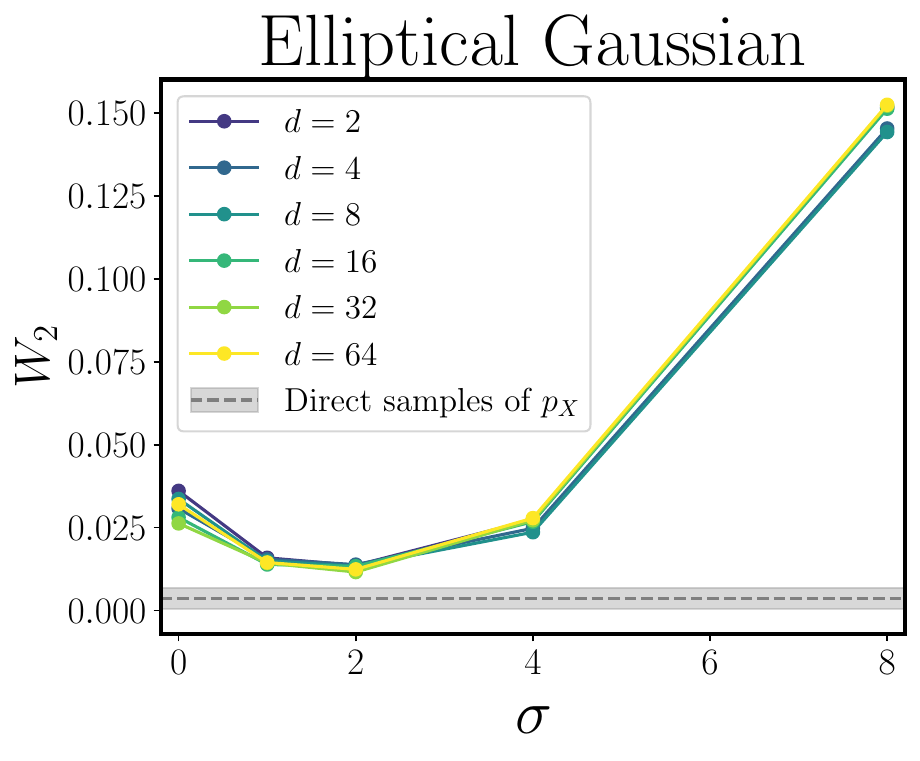}} 
\end{subfigure}
\begin{subfigure}[]
    {\includegraphics[width=0.24\textwidth]{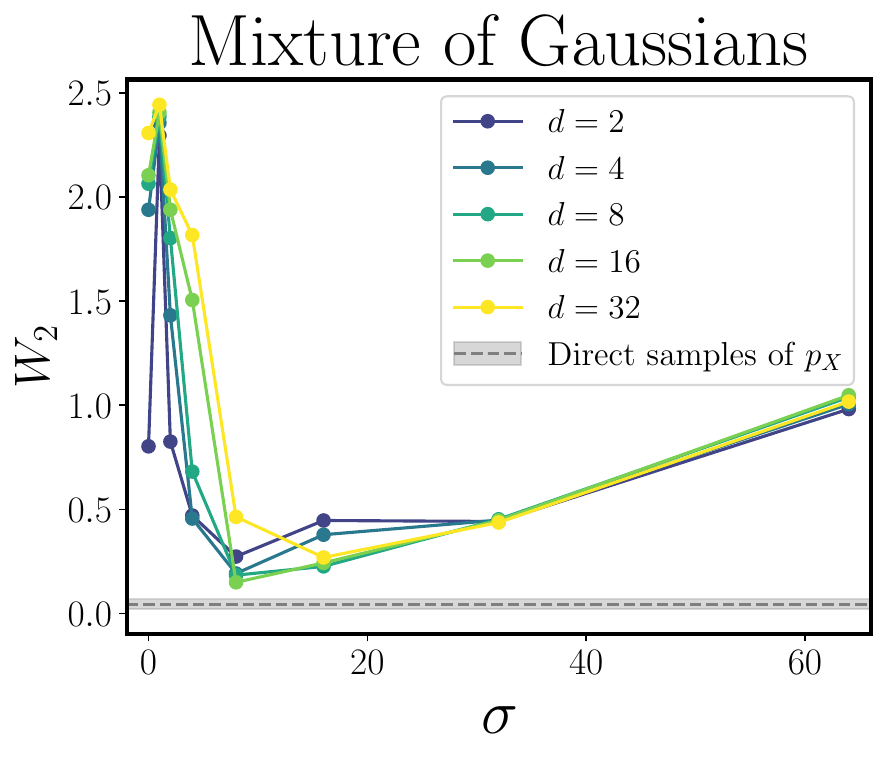}} 
\end{subfigure}
\end{center}
\vspace*{-0.4cm}
\caption{\label{fig:elliptical_vs_dim} \label{fig:elliptical_vs_sigma} \label{fig:gmm_vs_dim} 
\label{fig:gmm_vs_sigma} (a, b) 2-Wasserstein distance vs. $d$ and (c, d) 2-Wasserstein distance vs. $\sigma$ for varying $d$ for the elliptical Gaussian and Gaussian mixture target densities.} 
\end{figure}
%
{\bf Metric.} We use the Wasserstein metric to quantify the consistency of the obtained samples with the target density $p_X$. In general, evaluation of Wasserstein distance between high-dimensional distributions is nontrivial, as it involves identifying an optimal coupling and solving a multivariate integral. We implement a simplified version of this metric by viewing our target measure as empirical and restricting our focus to a representative marginal dimension by projecting our samples $X$ from $p_X$ and $\myhat{X}$ from ${\myhat p}_{\sigma, m}$ to a chosen vector $\axis$ in $\mathbb{R}^d$: $X^{\parallel} = \axis^\top X \in \mathbb{R}$. For one-dimensional empirical distributions, the $2$-Wasserstein distance between $p_X$ and ${\myhat p}_{\sigma, m}$ can be approximated as
$$
    W_2(X^\parallel, \myhat{X}^\parallel)^2 \approx \frac{1}{n} \sum_{i=1}^n |X^\parallel_{(i)} - \myhat{X}^\parallel
    _{(i)} |^2 ,
$$
where $X^\parallel_{(1)}, \cdots, X^\parallel_{(n)}$  are order statistics~\citep{bonneel2015sliced, peyre2019computational}. 


{\bf MCMC algorithms.}  For all the results in this section, we implement MCMC sampling based on underdamped Langevin diffusion (ULD). The particular algorithm used for the results shown in this section extends the BAOAB integration scheme using multiple time steps for the O-part \citep{sachs2017langevin}. In \autoref{app:mcmc_alg}, we present the full comparison across other MCMC algorithms, including other recent ULD variants~\citep{cheng2018underdamped, shen2019randomized} as well as the Metropolis-adjusted Langevin algorithm (MALA)~\citep{roberts1996exponential, dwivedi2018log}. 

{\bf Score estimation.}  In \autoref{app:score}, we compare sampling with the analytic score function and the plug-in estimator of the score function given in \eqref{eq:ghat} with varying numbers of MC samples $n$. 

\subsection{Elliptical Gaussian} \label{sec:elliptical_gaussian}

The elliptical Gaussian features a poorly conditioned covariance: $p_X(x) = \mathcal{N} \left(x; \ 0, C\right),$
where we set $\tau_0^2 = 0.1, \tau_1^2 = \cdots = \tau_d^2 = 1$ in \eqref{eq:elliptical-gaussian}. We evaluate the 2-Wasserstein distance on the ``difficult'' narrow dimension (with variance $\tau_0^2$). For each $d$, the noise level $\sigma$ and other hyperparameters of the sampling algorithm, such as step size and friction, were tuned. \autoref{fig:elliptical_vs_dim}(a) plots the 2-Wasserstein distance with varying $d$. OAT and $m=1$ outperform Langevin MCMC ($\sigma=0$) for all $d$, suggesting that smoothing offers an advantage. On the other hand, AAO struggles, which is expected from our theoretical analysis in \autoref{sec:geometry}. As \autoref{fig:elliptical_vs_sigma}(b) shows, OAT with $\sigma=1$ and $\sigma=2$ outperforms Langevin MCMC ($\sigma=0$) for all $d$. The optimal $\sigma$ remains fairly constant with increasing $d$.  
\subsection{Mixture of Gaussians} \label{sec:gmm}
To evaluate mixing of multiple modes, we consider the mixture of Gaussians test density \eqref{eq:two-mixture} with $\alpha=\nicefrac{1}{5}$, $\tau=1$, and $\mu = 3 \cdot 1_d$, where $1_d$ is the $d$-dimensional vector $(1,\dots,1)^\top$. As \autoref{fig:gmm_vs_dim}(a) shows, OAT achieves consistently low 2-Wasserstein distance with increasing $d$, whereas other sampling schemes deteriorate in performance. We observe, in \autoref{fig:gmm_vs_sigma}(b), that OAT outperforms the best-performing underdamped Langevin MCMC ($\sigma=0$) in our experiments  for at least one $\sigma$ value for all $d$. Higher $d$ requires larger $\sigma$. In addition, we would like to highlight the following: {\bf (I)} Vanilla walk-jump sampling ($m=1$) is highly ineffective as the dimensions increase. This is in contrast to the sampling from anisotropic Gaussian (already log-concave) in \autoref{sec:elliptical_gaussian}. {\bf (II)} The optimal $\sigma$ here is in fact larger than the noise level needed to make  $p(y_1)$ log-concave. This is related to the  benefits of sampling from better-conditioned log-concave distributions, which is well-known in the literature. In \autoref{app:gmm_correlated}, we  include results for a mixture of correlated Gaussians supporting the same conclusion.



\subsection{Tunneling phenomena }
\autoref{fig:trajectory} illustrates the trajectories of three walkers (a) under our OAT sampling scheme and (b) using Langevin MCMC. Each walker has the same random seed between (a) and (b) and was initialized at $(3, 3)$, the dominant mode with 80\% of the mass. With OAT, a walker is able to tunnel to the smaller mode fairly quickly, whereas for Langevin MCMC (without smoothing) all three walkers are stuck around the dominant mode. In panels (c) and (d) we also show the histogram of final samples in the same setup with initialization at $(3, 3)$ for 100 walkers after 100\,K steps.

In summary of this section, we consistently observe that the same (Langevin) MCMC algorithm, when used in the inner loop of Algorithm~\ref{alg:oat}, is more effective than when used without Gaussian smoothing. 


\begin{figure}[t!]
\begin{center}
\begin{subfigure}[OAT, 3 trajectories]    {\includegraphics[ width=0.24\textwidth]{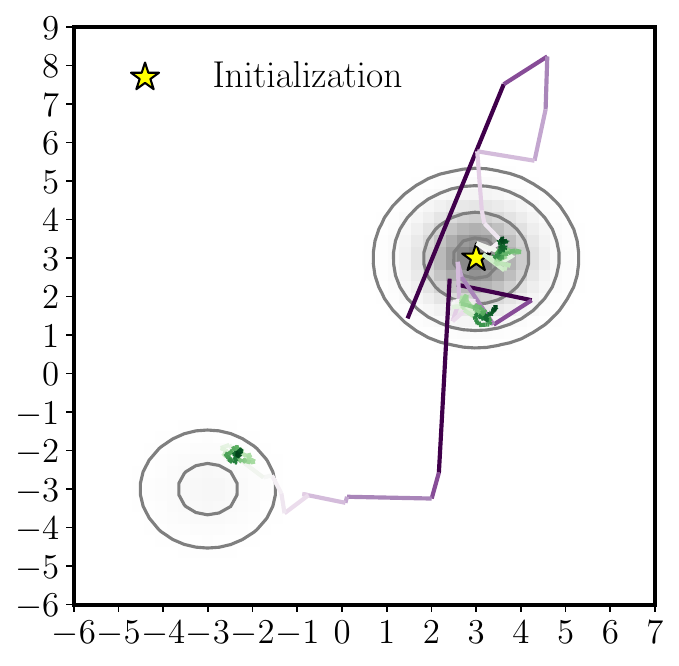}}  
\end{subfigure}
\begin{subfigure}[Langevin MCMC]{\includegraphics[ width=0.24\textwidth]{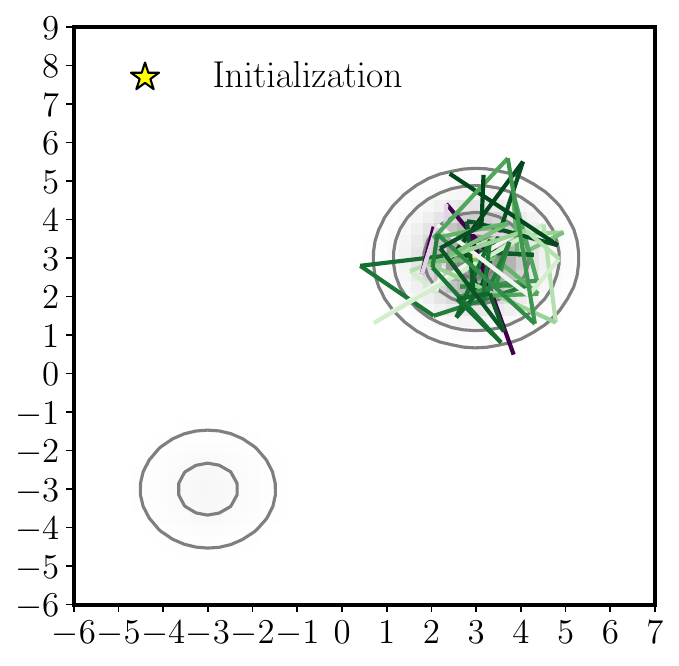}}  
\end{subfigure}
\begin{subfigure}[OAT, 100 samples]
{\includegraphics[width=0.24\textwidth]{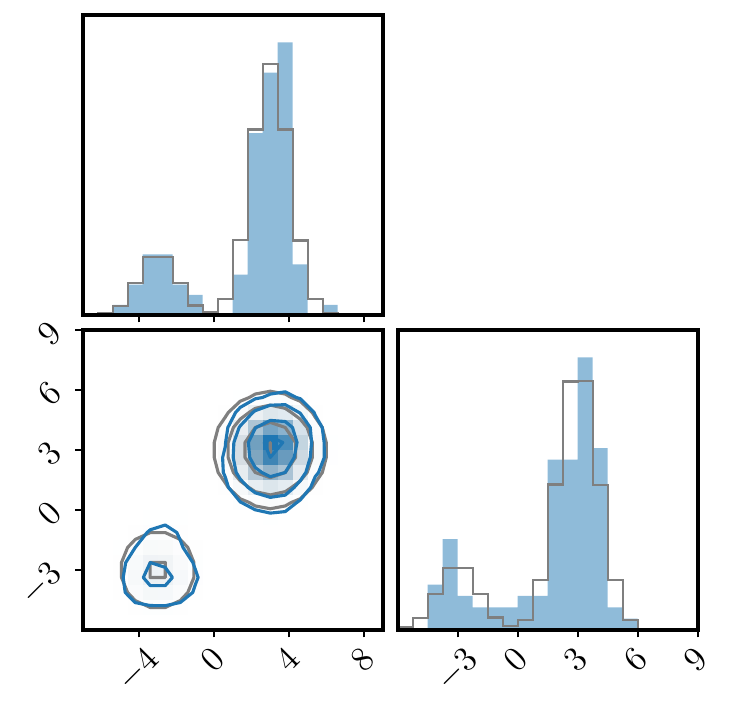}}  
\end{subfigure}
\begin{subfigure}[Langevin, 100 samples]
{\includegraphics[width=0.24\textwidth]{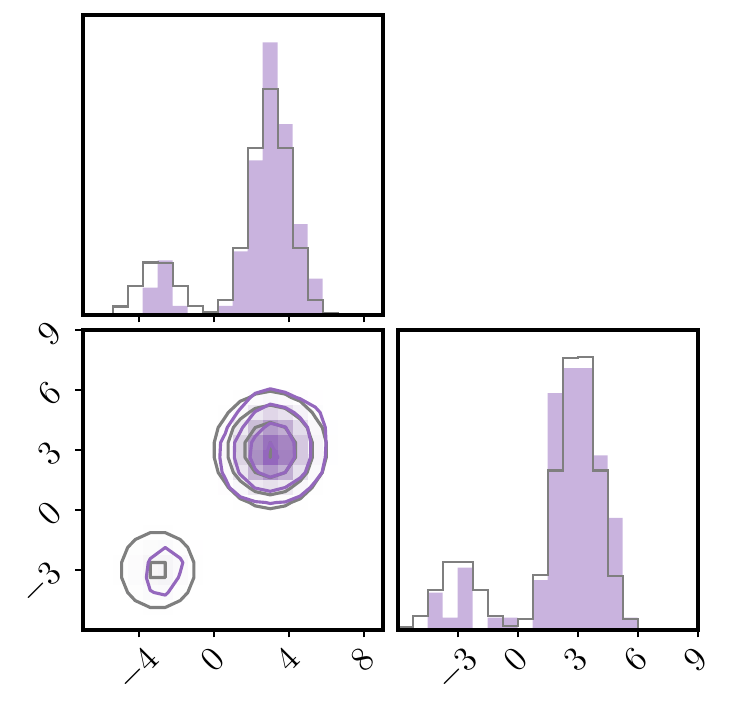}}   
\end{subfigure}
\end{center}
\vspace*{-0.3cm}
\caption{\label{fig:trajectory} {Tunneling phenomenon.} Trajectories of three walkers (a) under our OAT sampling scheme using Langevin MCMC by~\citet{sachs2017langevin} in the inner loop of Algorithm~\ref{alg:oat} (b) the same Langevin MCMC without any smoothing. Purple and green indicate the beginning and end of trajectories, respectively. (c, d) The final samples for 100 random trajectories (using identical seeds).}
\end{figure}


\section{Conclusion} \label{sec:conclusion}
In this paper, we established a theoretical framework that reduces the general problem of sampling from an unnormalized distribution to that of log-concave sampling defined by a single noise parameter.
We conclude with two main limitations of this work at the present time: {\bf (I)} Our results do not make it clear if the log-concave sampling strategy is optimal. The issue of ``optimality'' is challenging as it is inherently problem-dependent and additionally depends on the MCMC algorithm used in the inner loop of Algorithm~\ref{alg:oat}. {\bf (II)} Related to the issue of optimality is the fact that for general sampling problems, the smoothed score functions required to sample from $p(y_t|y_{1:t-1})$, $t \in[m]$ using gradient-based methods need to be estimated. This is a complex problem and should be investigated in future research. Finally, an immediate application of the machinery we developed here is the problem of generative modeling, as the $m$ smoothed score functions needed in running Algorithm~\ref{alg:oat} can indeed be learned (approximated) using empirical Bayes least-squares denoising objectives.
This approach is similar to training diffusion models; however, our sampling scheme is fundamentally different, as it relies on the accumulation of measurements, controlled by a single noise parameter, $\sigma$.

\bibliography{biblio.bib}
\bibliographystyle{iclr}


\clearpage

\appendix

\section{Derivations for \autoref{sec:universal} }
\label{sec:app:universal}
\subsection{Derivation for \eqref{eq:log-sigma-m-density}}
In this section we derive a general  expression for the $\sigmam$-densities $p(y_{1:m})$, short for $p_\sigma(y_{1:m})$:
\< \nonumber \begin{split} p(y_{1:m}) &= \int_{\mathbb{R}^d} p(x) \biggl( \prod_{t=1}^m p(y_t|x) \biggr) dx \\ &= \int_{\mathbb{R}^d} \frac{1}{Z}e^{-f(x)} \biggl( \prod_{t=1}^m \frac{1}{(2\pi \sigma^2)^{d/2}}\exp\Bigl(-\frac{1}{2\sigma^2} \bv x - y_t\bv^2 \Bigr) \biggr)  dx \\
&\propto \int_{\mathbb{R}^d} e^{-f(x)} \exp\Bigl(-\frac{1}{2\sigma^2} \sum_{t=1}^m \bv x - y_t\bv^2 \Bigr) dx.
 \end{split}
\>
The $\sigmam$-density is permutation invariant under the permutation $\pi: [m] \rightarrow [m]$ of measurement indices. In the calculation below we derive a general form for $\sigmam$-densities where this permutation invariance becomes apparent in terms of the empirical mean of the $m$ measurements
$$ \overline{y}_{1:m} =  \frac{1}{m} \sum_{t=1}^m y_t. $$
and the empirical mean of $\{\Vert y_t \Vert^2\}_{t=1}^m$.  We start with a rewriting of $\log p(y_{1:m}|x)$:
\< \nonumber
 \begin{split}
    - 2 \sigma^2 \log p(y_{1:m}|x) 
    &= \sum_{m=1}^M \bv y_m - x \Vert ^2 + \cst \\  
    &= m \Vert x \Vert^2 -2 \langle \sum_{t=1}^m y_t, x\rangle + \sum_{t=1}^m \Vert y_t \Vert^2 + \cst \\
    &= m\Bigl(\Vert x - \overline{y}_{1:m}\Vert^2  -\Vert \overline{y}_{1:m} \Vert^2 + \frac{1}{m} \sum_{t=1}^m \Vert y_t\Vert^2\Bigr)+ \cst,
 \end{split}
 \>
 where $\cst$  is a constant that does not depend on $y_{1:m}$. Now, we view the smoothing kernel $p(y_{1:m}|x)$ as a Gaussian distribution centered at $\overline{y}$:
 \< \nonumber \begin{split}
    & \log p(y_{1:m}) = \log \int e^{-f(x)} p(y_{1:m}|x)\,dx +\cst  \\
    &= \log \int e^{-f(x)} \mathcal{N}(x; \overline{y}, m^{-1} \sigma^2 I_d)\,  dx + \frac{\Vert \overline{y}_{1:m} \Vert^2- m^{-1} \sum_{t=1}^m \Vert y_t\Vert^2}{2m^{-1}\sigma^2} + \cst  \\
    &= \log \mathbb{E}_{x\sim \mathcal{N}(\overline{y}_{1:m},m^{-1} \sigma^2 I)} [e^{-f(x)}]+\frac{\Vert \overline{y}_{1:m} \Vert^2- m^{-1} \sum_{t=1}^m\Vert y_t\Vert^2}{2m^{-1}\sigma^2}+\cst.
 \end{split}
   \>
The equation above reduces to \eqref{eq:log-sigma-m-density} with the following definition  $$\varphi(y;\sigma) \coloneqq \log \mathbb{E}_{x \sim \mathcal{N}(y,\sigma^2 I)}[e^{-f(x)}].$$
\subsection{Derivation for \eqref{eq:xhat}}
Next, we derive the expression for $\myhat{x}(y_{1:m})= \mathbb{E}[X|y_{1:m}]$ given in \eqref{eq:xhat}:
\< \label{eq:app:xhat}
\begin{split}
    \mathbb{E}[X|y_{1:m}] &= y_t + \sigma^2 \nabla_{y_t} \log p(y_{1:m}) \\
    &= y_t + \sigma^2 \bigl(m^{-1} \nabla \varphi(\overline{y}_{1:m};m^{-1/2}\sigma)+\sigma^{-2}(\overline{y}_{1:m}-y_t)\bigr) \\
    &= \overline{y}_{1:m} + m^{-1}\sigma^2 \nabla \varphi(\overline{y}_{1:m};m^{-1/2}\sigma). 
\end{split}
\>
The first equation above comes from the generalization of the Bayes estimator to factorial kernels~\citep{saremi2022multimeasurement}, and for the second equation we used \eqref{eq:log-sigma-m-density}.

\section{Smoothed score functions} \label{app:score-hessian}
In this section, we give several different  expressions related to smoothed densities used in the paper. We consider $Y \sim \mathcal{N}(X,\sigma^2 I)$ where $X \sim e^{-f(x)}$ in $\mathbb{R}^d$, thus with the density
$$ p(y) = \int \frac{1}{Z} e^{-f(x)} \frac{1}{(2\pi \sigma^2)^{d/2}} e^{-\frac{1}{2\sigma^2}\Vert x-y\Vert^2} dx \propto \int e^{-f(x)}  e^{-\frac{1}{2\sigma^2}\Vert x-y\Vert^2} dx. $$
We have the following expressions for $\log p(y)$:
\begin{align*} 
 \log p(y) &= \log \int e^{-f(x)}  e^{-\frac{1}{2\sigma^2}\Vert x-y\Vert^2} dx + \cst,  \\
  \log p(y) &= \log \int e^{-f(x+y)}  e^{-\frac{1}{2\sigma^2}\Vert x\Vert^2} dx + \cst  ,
\end{align*}
leading to
\begin{align*} 
\nabla (\log p) (y) &= \frac{\int e^{-f(x)}  e^{-\frac{1}{2\sigma^2}\Vert x-y\Vert^2} \frac{x-y}{\sigma^2} dx}{\int e^{-f(x)}  e^{-\frac{1}{2\sigma^2}\Vert x-y\Vert^2} dx} = \frac{1}{\sigma^2}(\mathbb{E}[X|y]-y),\\  
\nabla (\log p) (y) &= - \frac{\int e^{-f(x+y)} \nabla f(x+y) e^{-\frac{1}{2\sigma^2}\Vert x\Vert^2} dx}{\int e^{-f(x+y)}  e^{-\frac{1}{2\sigma^2}\Vert x\Vert^2} dx} \\
&= - \frac{\int e^{-f(x)} \nabla f(x) e^{-\frac{1}{2\sigma^2}\Vert x-y\Vert^2} dx}{\int e^{-f(x)}  e^{-\frac{1}{2\sigma^2}\Vert x-y\Vert^2} dx} = - \mathbb{E}[\nabla f(X)|y].
\end{align*}
This in turn leads to three expressions for the Hessian:
\begin{align}
\nonumber
    \nabla^2 (\log p) (y) &= -\frac{1}{\sigma^2} I + \frac{1}{\sigma^4} \left(\mathbb{E}\left[x (x-y)^\top|y\right]-\mathbb{E}\left[x|y\right] \mathbb{E}\left[x-y|y\right]^\top\right) \\
\label{app:eq:hessian-1}    
    &= -\frac{1}{\sigma^2} I + \frac{1}{\sigma^4}\left(\mathbb{E}\left[x x^\top|y\right]-\mathbb{E}\left[x|y\right] \mathbb{E}\left[x|y\right]^\top\right) = -\frac{1}{\sigma^2} I + \frac{1}{\sigma^4} \cov(X|y), \\
\nonumber   
    \nabla^2 (\log p) (y) &= -\mathbb{E}[\nabla^2 f(X)|y]+ \mathbb{E}[\nabla f(X) \nabla f(X)^\top|y]  + \mathbb{E}[\nabla f(X)|y]\mathbb{E}[\nabla f(X)|y]^\top \\
\label{app:eq:hessian-2}    
    &= -\mathbb{E}[\nabla^2 f(X)|y] + \cov(\nabla f(X)|y), \\
\nonumber   
    \nabla^2 (\log p) (y) &= -\frac{1}{\sigma^2} \left(\mathbb{E}\left[ (x-y) \nabla f(x)^\top|y\right]-\mathbb{E}\left[(x-y)|y\right] \mathbb{E}\left[\nabla f(x)|y\right]^\top\right)\\
\nonumber   
    &= -\frac{1}{\sigma^2} \left(\mathbb{E}\left[ x \nabla f(x)^\top|y\right]-\mathbb{E}\left[x|y\right] \mathbb{E}\left[\nabla f(x)|y\right]^\top\right) =-\frac{1}{\sigma^2} \cov(X,\nabla f(X)|y).
\end{align}

\subsection{Gaussian example}
If $X \sim \mathcal{N}(\mu, C)$, then $Y \sim \mathcal{N}(\mu, C+\sigma^2 I)$, and then
\begin{align*}
    \mathbb{E}[X|Y] &= \mu + C(C+\sigma^2 I)^{-1}(Y-\mu),\\
    \cov(X|Y) &= C-C(C+\sigma^2 I)^{-1} C.
\end{align*}
Therefore, $\mathbb{E}[X|Y]$ is Gaussian with mean $\mu$ and the covariance matrix $C(C+\sigma^2 I)^{-1}C$. We have $-\nabla^2 (\log p)(y) = (C+\sigma^2 I)^{-1}.$
    
\subsection{Distribution of $\mathbb{E}[X|Y]$} \label{app:p-vs-phat-m1}
If $Y$ is sampled from $p_Y$, let $\myhat{p}_\sigma$ be the distribution of $\mathbb{E}[X|Y]=Y+\sigma^2 \nabla (\log p)(Y)$. We can bound the 2-Wasserstein distance between $p$ and $\myhat{p}_\sigma$, by sampling $X$ from $p_X$, and taking $\myhat{x}=Y + \sigma^2 \nabla \log p(Y)$, where $Y\sim \mathcal{N}(X,\sigma^2 I)$. This leads to a particular ``coupling''~\citep{villani2021topics} between $p$ and $\myhat{p}_{\sigma}$. It follows:
\begin{equation} \nonumber
        W_2(p,\myhat{p}_{\sigma})^2 \myleq \mathbb{E}_{X,Y} \Vert \mathbb{E}[X|Y]-X \Vert^2.
\end{equation}
Given that the conditional expectation $ \mathbb{E}[X|Y]$ is the optimal estimator for the square loss, the last expression is less than the trival estimator $Y$, that is,  \<  \label{eq:p-vs-phat-m1} W_2(p,\myhat{p}_{\sigma})^2 \myleq \mathbb{E}_{X,Y} \Vert Y -X \Vert^2 = \sigma^2 d.\>

\subsection{\autoref{prop:p-vs-phat}} \label{app:p-vs-phat}
The calculations in \autoref{app:p-vs-phat-m1} that resulted in \eqref{eq:p-vs-phat-m1} leads to the following:
\begin{proof}[Proof for \autoref{prop:p-vs-phat}]
 Using \autoref{prop:universal}, we have
$$ W_2(p,\myhat{p}_{\sigma,m}) = W_2(p,\myhat{p}_{m^{-1/2}\sigma}),$$
which is then combined with \eqref{eq:p-vs-phat-m1}:
$$ W_2(p,\myhat{p}_{\sigma,m})^2 \myleq  \frac{\sigma^2}{m} d.$$
\end{proof}

\section{Proofs for \autoref{sec:geometry}} \label{app:gaussian-kappa}
\subsection{\autoref{prop:allatonce}}
\begin{proof}[Proof for \autoref{prop:allatonce}]
We start with an expression for $\log p(y_{1:m})$:
\< \label{eq:app:logp-aao} \begin{split}
     \log p(y_{1:m}) &= \log \int_{\mathbb{R}^d} \left( \prod_{k=1}^m \mathcal{N}(x;y_k,\sigma^2 I) \right)  \mathcal{N}(x;0,C) \;dx \\
     &= \sum_{i=1}^d \log  \int_\mathbb{R} \exp{\left(-\sum_{k=1}^m\frac{\left(y_{ki} - x_i\right)^2}{2\sigma^2}-\frac{x_i^2}{2\tau_i^2}\right)} dx_i  + \cst\\
     &= \sum_{i=1}^d \log \int_\mathbb{R} \exp{\left(-\frac{(x_i-\alpha_i)^2}{2 \beta_i^2}-\gamma_i\right)} dx_i + \cst.
\end{split}
\>
The expressions for $\alpha_i$, $\beta_i$, and $\gamma_i$ are given next by completing the square via matching second, first and zeroth derivative (in that order) of the left and right hand sides below
$$ -\sum_{t=1}^m\frac{\left(y_{ti} - x_i\right)^2}{2\sigma^2}-\frac{x_i^2}{2\tau_i^2} = -\frac{(x_i-\alpha_i)^2}{2 \beta_i^2}-\gamma_i $$
evaluated at $x_i=0$. The following three equations follow:
\begin{align*}
    \frac{1}{\beta_i^2} &= \frac{m}{\sigma^2} + \frac{1}{\tau_i^2},\\
    \frac{\alpha_i}{\beta_i^2} = \sum_{t=1}^m \frac{y_{ti}}{\sigma^2} \Rightarrow \alpha_i &=  \frac{1}{m+\sigma^2\tau_i^{-2}}\sum_{t=1}^m y_{ti},\\
    -\frac{\alpha_i^2}{2 \beta_i^2}-\gamma_i = -\sum_{t=1}^m \frac{y_{ti}^2}{2\sigma^2} \Rightarrow \gamma_i &= \sum_{t=1}^m \frac{y_{ti}^2}{2\sigma^2} -  \frac{1}{2\sigma^2(m+\sigma^2 \tau_i^{-2})}  \Bigl(\sum_{t=1}^m y_{ti}\Bigr)^2. 
\end{align*}
It is convenient to define:
\< \label{eq:tsigmatau} \tsigmatau_{ti} = \frac{1}{t+\sigma^2 \tau_i^{-2}}.\>
Using above expressions, \eqref{eq:app:logp-aao} simplifies to:
\< \label{eq:app:logpy1:m}
    \log p(y_{1:m}) = -\sum_{i=1}^d \gamma_i + \cst \\
    = -\sum_{t=1}^m \frac{\Vert y_t \Vert^2}{2\sigma^2}+ \frac{1}{2\sigma^2} \sum_{i=1}^d \tsigmatau_{mi} \Bigl(\sum_{t=1}^m y_{ti}\Bigr)^2 + \cst.
\>
The energy function can be written more compactly by introducing the matrix $F_{\sigma,m}$:
\< \nonumber
\begin{split} 
\log p(y_{1:m}) &= -\frac{1}{2} y_{1:m}^\top F_{\sigma,m} y_{1:m},\\ 
\sigma^{2}  [F_{\sigma,m}]_{ti,t'i'} &= \left( (1-\tsigmatau_{mi}) \delta_{tt'}-  \tsigmatau_{mi} (1-\delta_{tt'}) \right) \delta_{ii'}.
 \end{split} 
\>
In words, the $md \times md$ dimensional matrix $F_{\sigma,m}$ is block diagonal with $d$ blocks of size $m\times m$. The blocks themselves capture the interactions between different measurements indexed by $t$ and $t'$:
The $m\times m$ blocks of the matrix $F_{\sigma,m}$, indexed by $i\in[d]$ thus have the form:
$$ \sigma^2 F_{\sigma,m}^{(i)} = 
  (1-\tsigmatau_{mi}) I_m+ \tsigmatau_{mi} (I_m - 1_m1_m^\top ),
$$
where $1_m^\top=(1,1,\dots,1)$ is $m$-dimensional. It is straightforward to find the $m$ eigenvalues of the $m\times m$ matrix $F_{\sigma,m}^{(i)}$ for $i\in [d]$: \begin{itemize}
    \item $m-1$ degenerate eigenvalues equal to $\sigma^{-2}$ corresponding to the eigenvectors 
$$\{(1, -1, 0, \dots, 0)^\top, (1, 0, -1, \dots, 0)^\top, \dots, (1, 0, 0, \dots, -1)^\top\},$$
    \item one eigenvalue equal to $\sigma^{-2}(1-m \tsigmatau_{mi})$ corresponding to the eigenvector $(1, 1, \dots, 1)^\top$.
\end{itemize} 
Since $ m \tsigmatau_{mi}>0 $ we have:$$
    \lambda_{\rm max}(F_{\sigma,m}) = \sigma^{-2}, $$ which is $(m-1) d$ degenerate. The remaining $d$ eigenvalues are $\{\sigma^{-2}(1-m \tsigmatau_{mi})\}_{i\in[d]}$, 
    the smallest of which is given by
$$  \lambda_{\rm min}(F_{\sigma,m}) =  \sigma^{-2}\left(1- \frac{m}{m + \sigma^2 \tau_{\rm max}^{-2}}\right) =  \frac{\sigma^{-2}}{1+m\sigma^{-2} \tau_{\rm max}^2}.
$$
Thus we have:
$$ \kappa_{\sigma,m} =  \lambda_{\rm max}(F_{\sigma,m})/\lambda_{\rm min}(F_{\sigma,m})=1+m\sigma^{-2} \tau_{\rm max}^2. $$
\end{proof}

\subsection{\autoref{prop:oneatatime}}

\begin{proof}[Proof for \autoref{prop:oneatatime}]
Using \eqref{eq:app:logpy1:m} and \eqref{eq:tsigmatau} we have:
$$
-2\sigma^2 \log p(y_{1:t}) = \sum_{k=1}^t \Vert y_k \Vert^2 - \sum_{i=1}^d A_{ti} \Bigl(\sum_{k=1}^t y_{ki}\Bigr)^2.	
$$
Since $\log p(y_t|y_{1:t-1}) = \log p(y_{1:t})-\log p(y_{1:t-1})$, we have:
\< \nonumber 
\begin{split}
	-2\sigma^2 \log p(y_t|y_{1:t-1}) &= \Vert y_t\Vert^2-\sum_{i=1}^d A_{ti} y_{ti}\Bigl(y_{ti}+2 \sum_{k=1}^{t-1} y_{ki}\Bigr) + \cst \\
	&= \sum_{i=1}^d (1-A_{ti})\cdot\Bigl(y_{ti}-\frac{A_{ti}}{1-A_{ti}} \sum_{k=1}^{t-1} y_{ki} \Bigr)^2 + \cst,
\end{split} 
\>
Therefore the conditional density $p(y_t|y_{1:t-1})$ is the Gaussian $\mathcal{N}(\mu_{t|t-1},F_{t|t-1}^{-1})$ with a shifted mean
$$ \mu_{t|t-1} = \Bigl(\frac{A_{ti}}{1-A_{ti}}\sum_{k=1}^{t-1} y_{ki} \Bigr)_{i\in[d]},$$ and with an anisotropic, diagonal covariance/precision matrix whose spectrum is given by:
$$ \sigma^{2} \lambda_i(F_{t|t-1}) =  1-(t+\sigma^2 \tau_i^{-2})^{-1}.$$
Thus:
$$
	\kappa_{t|t-1}= \frac{\lambda_{\max}(F_{t|t-1})}{\lambda_{\min}(F_{t|t-1})}=\frac{1-(t+\sigma^2 \tau_{\min}^{-2})^{-1}}{1-(t+\sigma^2 \tau_{\max}^{-2})^{-1}}.
$$
Lastly, to prove monotonicity result \eqref{eq:anisotropic-monotonicity} we do an analytic continuation of $\kappa_{t|t-1}$ to continuous values by defining $\eta(t) = \kappa_{t|t-1}$ and taking its derivative, below $R=\sigma^2 \tau_{\min}^{-2}, r=\sigma^2 \tau_{\max}^{-2}$, thus $R>r$:
\< \nonumber 
\begin{split}
\eta'(t)&=\frac{(t+R)^{-2}}{1-(t+r)^{-1}}-\frac{(1-(t+R)^{-1})(t+r)^{-2}}{(1-(t+r)^{-1})^2}\\
&= \frac{(t+r)(t+r-1)}{(t+R)^2(t+r-1)^2}-\frac{(t+R-1)(t+R)}{(t+r-1)^2(t+R)^2}\\
&= \frac{(t+r)^2-r-(t+R)^2+R}{(t+r-1)^2(t+R)^2}	\\
&= \frac{(r-R)(2t+r+R-1)}{(t+r-1)^2(t+R)^2}<0.
\end{split}
\>
\end{proof}

\section{Proofs for \autoref{sec:log-concave}} \label{app:logconcave}
\subsection{\autoref{lemma:log-concave}}
\begin{proof}[Proof for \autoref{lemma:log-concave}]
We would like to find an upper bound for $\nabla^2 (\log p) (y)$. By using $$\nabla^2 (\log p) (y)=-\sigma^{-2} I+ \sigma^{-4}\cov(X|y),$$ derived in \autoref{app:score-hessian}, we need to upper bound $\cov(X|y)$. It suffices to study $\mathbb{E}[\Vert X-x_0\Vert^2|y]$ since $$\cov(X|y)=\cov(X-x_0|y) \mypreceq \mathbb{E}[\Vert X-x_0\Vert^2|y]\, I.$$ 

We have by a convexity argument\footnote{We consider the exponential family $p(x|\nu)=\exp\bigl(-f(x)+\frac{1}{\sigma^2}(x-x_0)^\top (y-x_0)-\frac{\nu}{2\sigma^2}\Vert x-x_0\Vert^2-a(\nu)\bigr).$ Since $a(\nu)$ is convex~\citep{wainwright2008graphical}, we have $a'(1) \mygeq a'(0)$, which is exactly the desired statement.}:
\< \label{eq:exp-family}
\begin{split}
\mathbb{E}[\Vert X-x_0\Vert^2|y] &= \frac{\int e^{-f(x)-\frac{1}{2\sigma^2}\Vert x-y\Vert^2} \Vert x-x_0\Vert^2 dx}{\int e^{-f(x)-\frac{1}{2\sigma^2}\Vert x-y\Vert^2} dx}\\
&\myleq \frac{\int e^{-f(x)+\frac{1}{\sigma^2}(x-x_0)^\top (y-x_0)} \Vert x-x_0\Vert^2 dx}{\int e^{-f(x)+\frac{1}{\sigma^2}(x-x_0)^\top (y-x_0)} dx}.
\end{split}
\>


Next, we find an upper bound for $\Vert x - x_0 \Vert^2$ itself. Using the assumption in the lemma we have:
$$ \Vert \nabla f(x)+ \sigma^{-2}(x_0-y) \Vert \mygeq \Vert \nabla f(x) \Vert - \sigma^{-2} \Vert x_0-y \Vert \mygeq \mu \Vert x - x_0 \Vert - \Delta - \sigma^{-2}  \Vert x_0-y \Vert, $$
leading to $ \mu \Vert x - x_0 \Vert \myleq \Vert \nabla f(x)+\sigma^{-2}(x_0-y) \Vert + \Delta + \sigma^{-2} \Vert x_0-y \Vert $ and thus 
$$ \mu^2 \Vert x - x_0 \Vert^2 \myleq 3 \Vert \nabla f(x)+ \frac{x_0-y}{\sigma^2} \Vert^2+ 3 \Delta^2 + 3 \frac{\Vert x_0-y\Vert^2}{\sigma^4}. $$
Finally, using \eqref{eq:exp-family}, we only need to find an upper-bound for $\Vert \nabla f(x)+ \sigma^{-2}(x_0-y) \Vert^2$ under the distribution $\tilde{p}(x) \propto e^{-\tilde{f}(x)}$, where $\tilde{f}(x)= f(x)-\sigma^{-2}(x-x_0)^\top(y-x_0)$. This is achieved with:
$$ \int \tilde{p}(x) \Vert \nabla f(x)+\frac{x_0-y}{\sigma^2} \Vert^2 dx=\int \tilde{p}(x) \Vert \nabla \tilde{f}(x) \Vert^2 dx = \int \tilde{p}(x)\, \tr \nabla^2 f(x) dx \myleq  L d, $$
where second equality is obtained using integration by parts akin to score matching~\citep{hyvarinen2005estimation}, and for the last inequality we used our assumption $\nabla^2 f(x) \mypreceq L I$.
Putting all together we arrive at:
$$\nabla^2 (\log p)(y) \mypreceq \Bigl(-1 + \frac{3Ld}{\mu^2 \sigma^2}+\frac{3 \Delta^2}{\mu^2 \sigma^2}+\frac{3\Vert x_0-y\Vert^2}{\mu^2 \sigma^6}\Bigr) \frac{I}{\sigma^2}.$$
\end{proof}

\subsection{\autoref{lemma:more-logconcave}}
Since $\log p(y_t|y_{1:t-1}) = \log p(y_{1:t}) - \log p(y_{1:t-1})$, we have $$ \nabla_{y_t}^k \log p(y_t|y_{1:t-1}) = \nabla_{y_t}^k \log p(y_{1:t}).$$ 
Start with the score function ($k=1$):
$$
     \nabla_{y_t} \log p(y_{1:t}) = \frac{\int  \sigma^{-2} (x-y_t) p(x) \prod_{i=1}^t p(y_i|x)  dx}{\int  p(x) \prod_{i=1}^t p(y_i|x)  dx} = \sigma^{-2} \left(\mathbb{E}[X|y_{1:t}] - y_t \right).
$$
Next, we derive the Hessian $\nabla_{y_t}^2 \log p(y_{1:t})=- \sigma^{-2} I + A_t - B_t $, where $A_t$ and $B_t$ are given by:
\begin{align*} 
     A_t &= \frac{\int  \sigma^{-4} (x-y_t)(x-y_t)^\top p(x) \prod_{i=1}^t p(y_i|x)  dx}{\int  p(x) \prod_{i=1}^t p(y_i|x)  dx} = \sigma^{-4}\, \mathbb{E}[(X-y_t)(X-y_t)^\top|y_{1:t}], \\
    B_t &= \sigma^{-4} \left(\frac{\int   (x-y_t) p(x) \prod_{i=1}^t p(y_i|x)  dx}{\int  p(x) \prod_{i=1}^t p(y_i|x)  dx}\right) \left(\frac{\int   (x-y_t) p(x) \prod_{i=1}^t p(y_i|x)  dx}{\int  p(x) \prod_{i=1}^t p(y_i|x)  dx}\right)^\top \\ \nonumber
    &= \sigma^{-4} \left(\mathbb{E}[X|y_{1:t}] - y_t \right)  \left(\mathbb{E}[X|y_{1:t}] - y_t \right) ^\top.
\end{align*}
By simplifying $A_t-B_t$, the $y_t$ cross-terms cancel out and the posterior covariance matrix emerges:
\<  \label{eq:hessian-t}
    \nabla_{y_t}^2 \log p(y_{1:t}) =  - \sigma^{-2} I + \sigma^{-4} \cov(X|y_{1:t}).
\>
The lemma is proven since the mean of the posterior covariance $\mathbb{E}_{y_{1:t}} \cov(X|y_{1:t})$ can only go down upon accumulation of measurements (conditioning on more variables).

\subsection{\autoref{theorem:compact}} \label{app:lemma-compact}
\begin{proof}[Proof for \autoref{theorem:compact}]
We start with start with the definition of $\sigmam$-density:
$$
p(y_{1:m}) = \int_\mathcal{Z} p(z) p(y_{1:m}|z) dz \propto \int_\mathcal{Z} p(z) \left(\int_\mathcal{X} \exp\biggl(-\frac{1}{2\tau^2} \Vert x-z\Vert^2-\frac{1}{2\sigma^2} \sum_{t=1}^m \Vert x - y_t \Vert^2 \biggr) dx \right) dz,
$$
which we express by integrating out $x$. We have 
\< \nonumber 
\begin{split}
p(y_{1:m}|z) &\propto  \int_\mathcal{X} \exp\biggl(-\frac{1}{2\tau^2} \Vert x-z\Vert^2-\frac{m}{2\sigma^2} \Bigl(\Vert x - \overline{y}_{1:m}\Vert^2  -\Vert \overline{y}_{1:m} \Vert^2 + \frac{1}{m} \sum_{t=1}^m \Vert y_t\Vert^2\Bigr) \biggr) dx \\
&\propto \exp\left( -\frac{m}{2(m\tau^2+\sigma^2)}\Vert z - \overline{y}_{1:m}\Vert^2+\frac{m}{2\sigma^2} \Vert \overline{y}_{1:m} \Vert^2 - \frac{1}{2\sigma^2}  \sum_{t=1}^m \Vert y_t\Vert^2 \right).
\end{split}
\>
We can then express $\nabla_{y_m} \log p(y_{1:m})$ in terms of $\mathbb{E}[Z|y_{1:m}]$:
$$\nabla_{y_m} \log p(y_{1:m})=\frac{m\tau^2}{\sigma^2(m\tau^2+\sigma^2)}\overline{y}_{1:m}-\frac{y_m}{\sigma^2}+\frac{1}{m\tau^2+\sigma^2} \mathbb{E}[Z|y_{1:m}].$$
Next, we take another derivative:
$$  \nabla^2_{y_m} \log p (y_{1:m}) = \Bigl(\frac{\tau^2}{\sigma^2(m\tau^2+\sigma^2)}-\frac{1}{\sigma^2}\Bigr) \cdot I+\frac{1}{m\tau^2+\sigma^2} \nabla_{y_m} \mathbb{E}[Z|y_{1:m}].$$
Next, we compute $\nabla_{y_m} \mathbb{E}[Z|y_{1:m}]$:
\< \nonumber
\begin{split}
	\nabla_{y_m} \mathbb{E}[Z|y_{1:m}] &= \frac{1}{m\tau^2+\sigma^2}\mathbb{E}[ZZ^\top|y_{1:m}]+\frac{m\tau^2}{m\tau^2+\sigma^2} \mathbb{E}[Z|y_{1:m}] \overline{y}_{1:m}^\top-\frac{1}{\sigma^2} \mathbb{E}[Z|y_{1:m}] y_m^\top \\
	&-\mathbb{E}[Z|y_{1:m}] \left( \frac{1}{m\tau^2+\sigma^2} \mathbb{E}[Z|y_{1:m}]+\frac{m\tau^2}{m\tau^2+\sigma^2} \overline{y}_{1:m}-\frac{1}{\sigma^2} y_m\right)^\top \\
	&= \frac{1}{m\tau^2+\sigma^2}  \cov(Z|y_{1:m}).
\end{split}
\>
Putting all together, we arrive at:\footnote{Not required for the proof, but we can also  relate $\cov(X|y_{1:m})$ and $\cov(Z|y_{1:m})$ directly since as we know (see the proof of \autoref{lemma:more-logconcave}):
\< \label{eq:covz:2} \nabla^2_{y_m} \log p (y_{1:m}) = -\frac{1}{\sigma^2} I + \frac{1}{\sigma^4} \cov(X|y_{1:m}).\>
By combining \eqref{eq:covz:1} and \eqref{eq:covz:2}, it follows:
$$ \cov(X|y_{1:m}) = \frac{\sigma^2\tau^2}{m\tau^2+\sigma^2} I + \Bigl(\frac{\sigma^2}{m\tau^2+\sigma^2}\Bigr)^2 \cov(Z|y_{1:m}).$$}
\< \label{eq:covz:1} \nabla^2_{y_m} \log p (y_{1:m}) = \frac{1}{\sigma^2}\Bigl(\frac{\tau^2}{m\tau^2+\sigma^2}-1\Bigr) \cdot I + \frac{1}{(m\tau^2+\sigma^2)^2} \cov(Z|y_{1:m}).  \>
Finally, since $\Vert Z \Vert^2 \myleq R^2$ almost surely, we have $\cov(Z|y_{1:m}) \mypreceq R^2 I,$ therefore $$ \nabla_{y_m}^2 \log p(y_m|y_{1:m-1}) = \nabla_{y_m}^2 \log p(y_{1:m}) \mypreceq \zeta(m) I,$$
where
$$ \zeta(m) = \frac{1}{\sigma^2}\Bigl(\frac{\tau^2}{m \tau^2+\sigma^2}-1\Bigr)+\frac{R^2}{(m \tau^2+\sigma^2)^2}. $$
\end{proof}
%

\subsection{Derivation for \eqref{eq:mog-hess}}
In this example, we have $X=Z+N_0$, $N_0 \sim \mathcal{N}(0,\tau^2 I)$, and $Y=X+N_1$, $N_1 \sim \mathcal{N}(0,\sigma^2 I)$, therefore
\< \label{app:eq:y=z+n} Y = Z + N,\; N\sim \mathcal{N}(0,(\sigma^2 + \tau^2) I),\>
where $Z \sim p(z),$
$$ p(z) = \frac{1}{2} \delta(z-\mu) + \frac{1}{2} \delta(z+\mu),$$
$\delta$ is the Dirac delta function in $d$-dimensions. Alternatively, we have $p(y) = \int p(z) p(y|z) dz$, where
$$ p(y|z) = \mathcal{N}(y; z, (\sigma^2 + \tau^2) I).$$
Using \eqref{app:eq:hessian-1}, adapted for \eqref{app:eq:y=z+n}, we have:
\< \label{eq:app:mog-hess} \hess(y) = -\nabla^2 (\log p) (y) = \frac{1}{\sigma^2 +\tau^2}\Bigl( I - \frac{1}{\sigma^2+\tau^2} \cov(Z|y) \Bigr).\>
Next, we derive an expression for $\cov(Z|y)$:
$$
	\cov(Z|y) = \mathbb{E}[Z Z^\top|y] - \mathbb{E}[Z|y] \mathbb{E}[Z|y]^\top.
$$
We have:
\begin{align*}
	\mathbb{E}[Z Z^\top|y] &= \mu \mu^\top, \\
	\mathbb{E}[Z|y] &= \mu \cdot \frac{e^{-A}-e^{-B}}{e^{-A}+e^{-B}} ,\\
	A &= \frac{\Vert y-\mu\Vert^2}{2(\sigma^2+\tau^2)},\\
	B &= \frac{\Vert y+\mu\Vert^2}{2(\sigma^2+\tau^2)}.
\end{align*}
It follows:
\<
\label{eq:app:mog-cov}
\begin{split}
	\cov(Z|y) &= \mu \mu^\top \cdot \biggl( 1- \Bigl( \frac{e^{-A}-e^{-B}}{e^{-A}+e^{-B}} \Bigr)^2 \biggr) 
	= \frac{2\mu \mu^\top}{1+\cosh(B-A)} \\
	&= 2\mu \mu^\top \cdot \biggl(1+\cosh\Bigl(\frac{2 \mu^\top y}{\sigma^2+\tau^2}\Bigr)\biggr)^{-1}.
\end{split}
\>
By combining \eqref{eq:app:mog-hess} and \eqref{eq:app:mog-cov} we arrive at \eqref{eq:mog-hess}. 


\section{Detailed algorithm} \label{app:detailed_algorithm}

\begin{algorithm}[H]
\caption{\mcmc$_\sigma$ in Algorithm~\ref{alg:oat} via Underdamped Langevin MCMC by Sachs et al. (2017)}
  \begin{algorithmic}[1]
  \STATE \textbf{Input} $Y_t^{(i-1)}$, ${\overline Y}_{1:t-1}$ 
    \STATE \textbf{Parameters} current MCMC iteration $i$, current measurement index $t$, step size $\delta$, friction $\gamma$, steps taken $n_t$, smoothed score function ${\myhat{g}}_1(y; \sigma)$ \eqref{eq:ghat}, Lipschitz parameter $L$, noise level $\sigma$
    \STATE \textbf{Output} $Y_t^{(i)}$
    \STATE Initialize $Y^{(i, 0)}_t \sim \text{Unif}([0,1]^{d}) + \mathcal{N}(0,\sigma^2 I) $
    \STATE Initialize $V \leftarrow 0$
    \FOR{$k=[0,\dots, K-1]$}
      \STATE $Y^{(i, k + 1)}_t  = Y^{(i, k)}_t + \frac{\delta}{2}\,V$
      \STATE ${{\overline Y}_{1:t}} \leftarrow {{\overline Y}_{1:t-1} + (Y_t^{(i, k+1)} - {\overline Y}_{1:t-1})/t}$
      \STATE $ G   \leftarrow  t^{-1}{\hat g}_1({\overline Y}_{1:t}; t^{-1/2}\sigma) + \sigma^{-2} ({\overline Y}_{1:t} - Y_t^{(i, k+1)})$ according to \eqref{eq:ghat_oat}
      \vspace{1mm}
      \STATE $V \leftarrow V  + \frac{\delta}{2L}\,{G} $
      \vspace{2mm}
      \STATE $ B \sim \mathcal{N}(0,I) $ \\
      \STATE $ V \leftarrow \exp(- \gamma \delta)\, V + \frac{\delta}{2L}\,{ G} + \sqrt{ \frac{1}{L}\left(1-\exp(-2 \gamma \delta )\right)} { B}$
      \STATE $ Y^{(i, k + 1)}_t \leftarrow Y^{(i, k + 1)}_t + \frac{\delta}{2}\, V$
    \ENDFOR
    \RETURN{$Y_t^{(i)} = Y_t^{(i, K)}$}
  \end{algorithmic}
 \label{alg:detailed_alg}
\end{algorithm}

\section{Experimental detail} \label{app:experimental_detail}

We fixed the Lipschitz parameter at $L = \frac{1}{\sigma^2}$. The other parameters were tuned on a log-spaced grid. We searched the step size $\delta$ over $\{ 1\text{e-}4, 3\text{e-}4, 1\text{e-}3, 3\text{e-}3, 1\text{e-}2 \}$ and the effective friction $\gamma \delta$ over $\{0.0125, 0.025, 0.05, 0.1, 0.2, 0.4, 0.8, 1.6\}$. We found that $\delta$ of $0.03$ and $\gamma \delta$ of 0.05 ($\gamma = 5/3$) worked well for most configurations of test density type, $\sigma$, $d$, and MCMC algorithm. For AAO, $m$ ran over $\{200, 400, 600, 800, 1000\}$. For OAT, we fixed $n_t = 1000$ for all $t$.

The hyperparameters were tuned for each algorithm, but the total number of iterations was kept fixed. We define each iteration as an MCMC update step. For OAT, the total number of iterations is $\sum_{t=1}^m n_t$, the number of MCMC iterations for each measurement $t$, $n_t$, summed up over the $m$ measurements. We had $n_t=100$ for all $t$ and $m=1000$. For the remaining three sampling schemes (AAO, $m=1$, and $\sigma=0$), the total number of iterations is simply the number of MCMC iterations. 

We found a cold initialization of $p(y_t|y_{1:t-1})$ samples for each Markov phase $m$ to work better than a warm initialization at the final samples from the previous phase $t-1$. We thus re-initialized $y_t| y_{1:t-1} \sim  {\rm Unif}([-1, 1]^d) + \mathcal{N}(0, \sigma^2 I)$ for every $t$.

\section{MCMC algorithms} \label{app:mcmc_alg}

Our algorithm is agnostic to the choice of MCMC sampling algorithm used in the Markovian phases. In this section, we run OAT sampling with four different Langevin MCMC algorithms. The results presented earlier in \autoref{sec:experiments} uses an ULD algorithm with an Euler discretization scheme that extends the BAOAB integration using multiple time steps for the O-part (``Sachs et al.'') \citep{sachs2017langevin}. Next, we consider two algorithms that operate on the integral representations of ULD. Recall that continuous-time ULD is represented by the following stochastic differential equation (SDE):
\begin{align*}
    d v_t &= -\gamma v_t dt - u \nabla f(x_t) dt + (\sqrt{2 \gamma u}) d B_t, \nonumber \\
    d x_t &= v_t dt,
\end{align*}
where $x_t, v_t \in \mathbb{R}^d$ and $B_t$ is the standard Brownian motion in $\mathbb{R}^d$. The solution $(x_t, v_t)$ to the continuous-time ULD is
\begin{align} \label{eq:continuous_integral_uld}
    v_t &= v_0 e^{-\gamma t} - u\left(\int_0^t \exp \left(-(t-s)\right) \nabla f(x_s) ds \right) + \sqrt{2 \gamma u} \int_0^t \exp\left(-\gamma (t-s)\right) d B_s, \nonumber \\
    x_t &= x_0 + \int_0^t v_s ds.
\end{align}

Similarly, the discrete ULD is defined by the SDE
\begin{align*}
    d {\tilde v_t} &= -\gamma v_t dt - u \nabla f({\tilde x_0}) dt + (\sqrt{2 \gamma u}) d B_t, \nonumber \\
    d {\tilde x_t} &= {\tilde v_t} dt,
\end{align*}
which yields the solution
\begin{align} \label{eq:discrete_integral_uld}
    {\tilde v_t} &= {\tilde v_0} e^{-\gamma t} - u\left(\int_0^t \exp \left(-(t-s)\right) \nabla f({{\tilde x}_0}) ds \right) + \sqrt{2 \gamma u} \int_0^t \exp\left(-\gamma (t-s)\right) d B_s, \nonumber \\
    {\tilde x_t} &= {{\tilde x}_0} + \int_0^t {\tilde v_s} ds.
\end{align}
\cite{shen2019randomized} seeks a lower discretization error by using a 2-step fixed point iteration method, or the randomized midpoint method. The integral in \eqref{eq:continuous_integral_uld} is evaluated along uniform random points between 0 and $t$. On the other hand, \citet{cheng2018underdamped}  computes the moments of the joint Gaussian over $({\tilde x}_t, {\tilde v}_t)$ in the updates of \eqref{eq:discrete_integral_uld}. In our comparison we additionally include MALA, an Euler discretization of the overdamped Langevin dynamics represented by the SDE
\begin{align} \label{eq:sde_uld_continuous}
    d x_t = - u \nabla f(x_t) dt + (\sqrt{2 \gamma u}) d B_t,
\end{align}
accompanied by Metropolis adjustment to correct for the discretization errors \citep{roberts1996exponential}.


\begin{figure}[t!]
\begin{center}
{\includegraphics[width=0.49\textwidth]{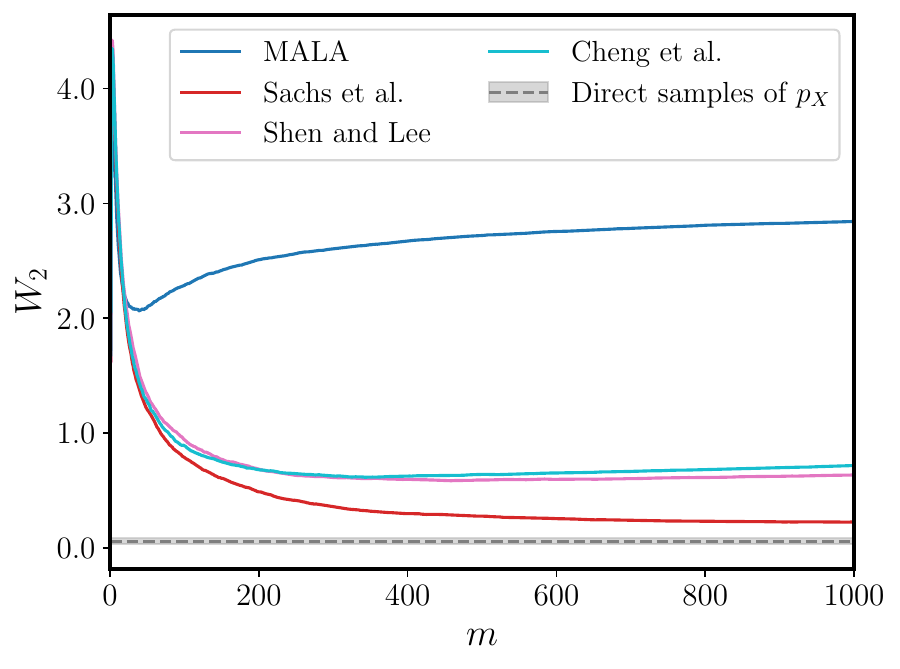}}
\end{center}
\caption{\label{fig:mcmc_alg} $W_2$ vs. $m$ for various MCMC$_\sigma$ algorithms used in the inner loop of Algorithm~\ref{alg:oat}.}
\end{figure}

\begin{figure}[ht!]
\begin{center}
\begin{subfigure}[MALA~\citep{roberts1996exponential}]
{\includegraphics[width=0.49\textwidth]{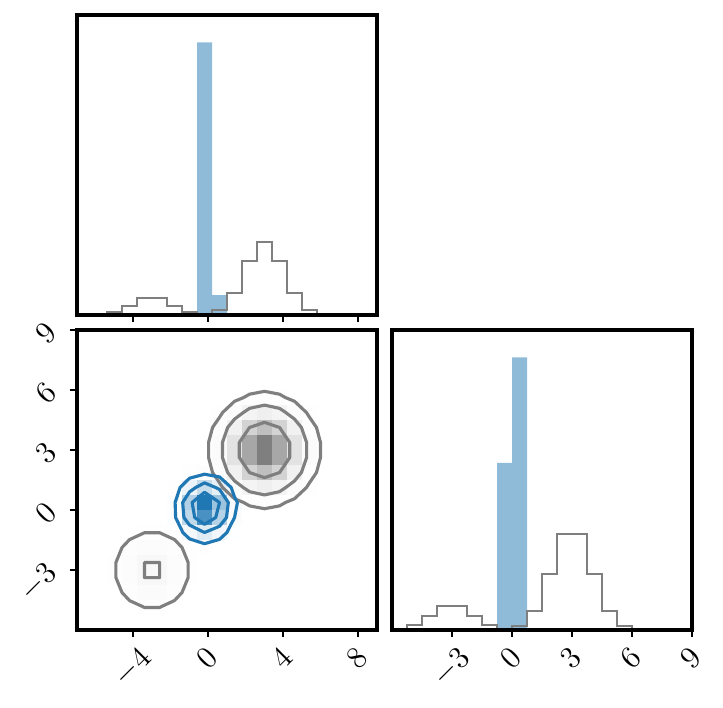}}
\end{subfigure}
\begin{subfigure}[\citet{cheng2018underdamped}]
{\includegraphics[width=0.49\textwidth]{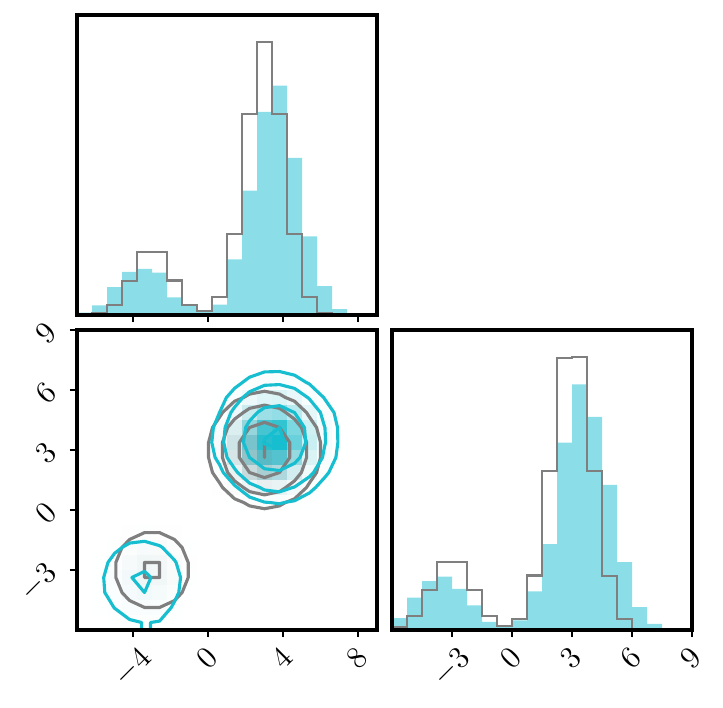}}  
\end{subfigure}
\begin{subfigure}[\citet{shen2019randomized}]
{\includegraphics[width=0.49\textwidth]{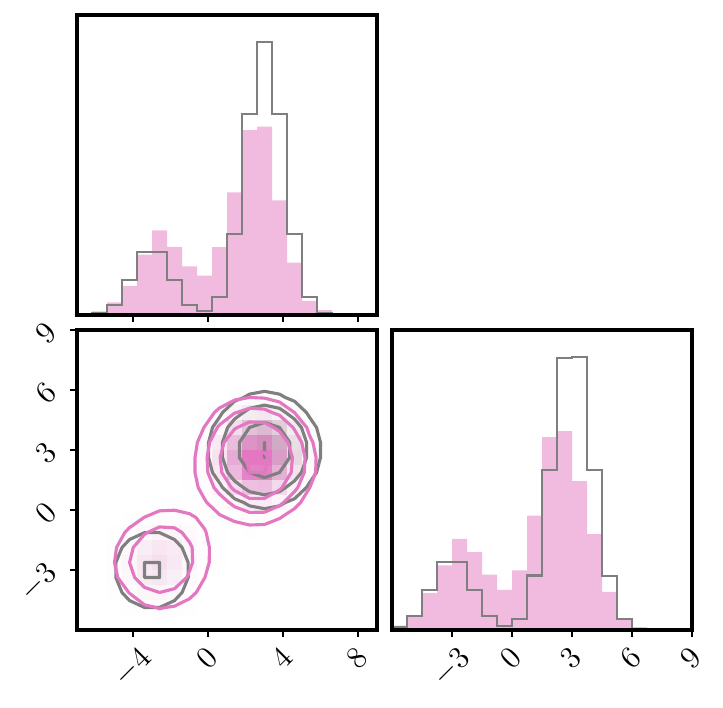}}
\end{subfigure}
\begin{subfigure}[\citet{sachs2017langevin}]
{\includegraphics[width=0.49\textwidth]{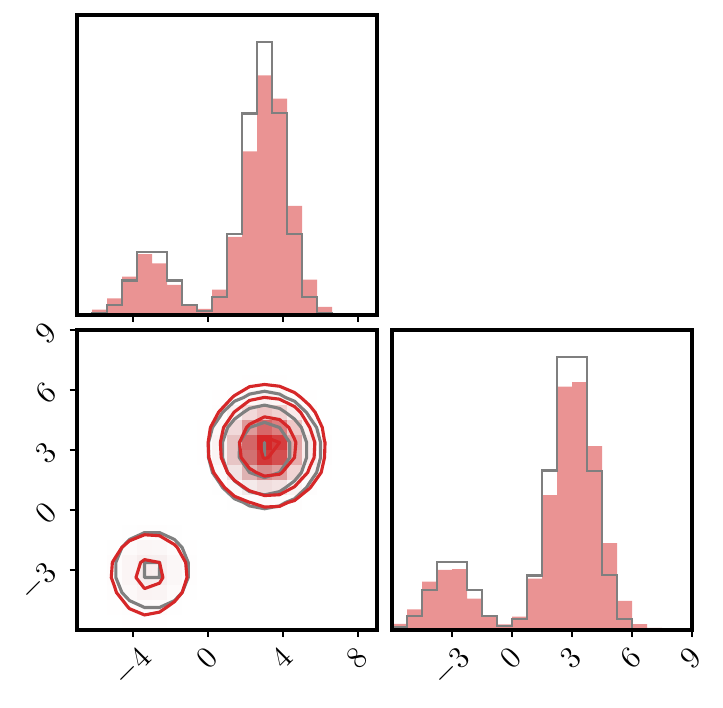}}
\end{subfigure}
\end{center}
\caption{\label{fig:mc_samples} Final $\myhat X$ samples for various MCMC$_\sigma$ algorithms used in the inner loop of Algorithm~\ref{alg:oat}.}
\end{figure}

The algorithms are compared in \autoref{fig:mcmc_alg} for the Gaussian mixture test density introduced in \autoref{sec:gmm} with $d=8$. The three (unadjusted) ULD algorithms converge faster than does MALA to a lower $W_2$. In ULD algorithms, Brownian motion affects the positions $x_t$ through the velocities $v_t$, rather than directly as in MALA, resulting in a smoother evolution of $x_t$ that lends itself better to discretization. The first two dimensions of the final samples are displayed in \autoref{fig:mc_samples}. MALA samples fail to separate into the two modes, whereas Cheng et al, Shen and Lee, and Sachs et al have better sample quality, with Sachs et al performing the best and almost approaching the sample variance of $W_2$ when samples are directly drawn from $p_X$ (the gray band). 

\begin{figure}[t!]
\begin{center}
\begin{subfigure}[$W_2$ vs. $m$]
{\includegraphics[width=0.49\textwidth]{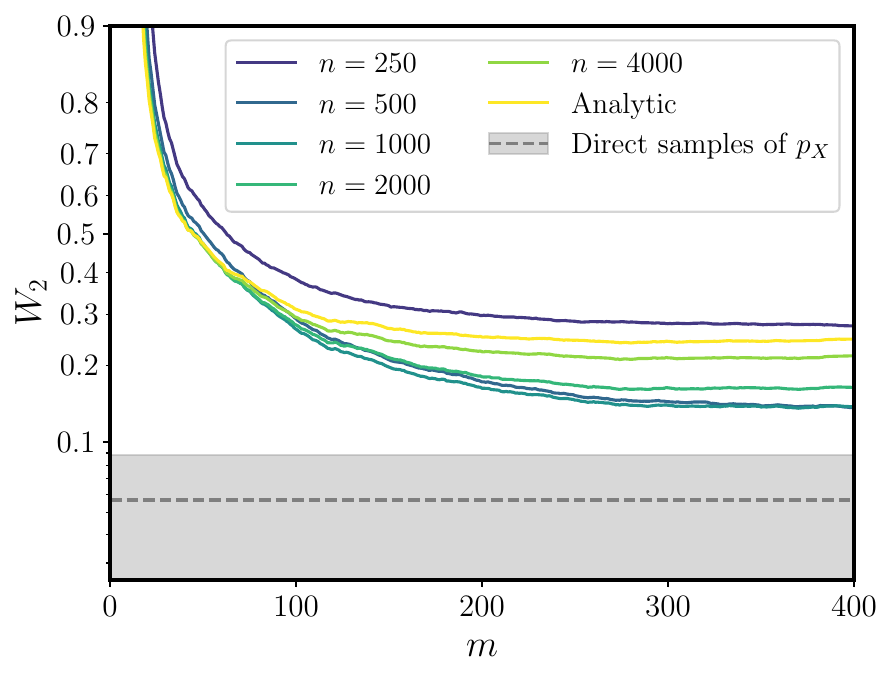}}
\end{subfigure}
\begin{subfigure}[Final $W_2$ vs. number of MC samples]
{\includegraphics[width=0.49\textwidth]{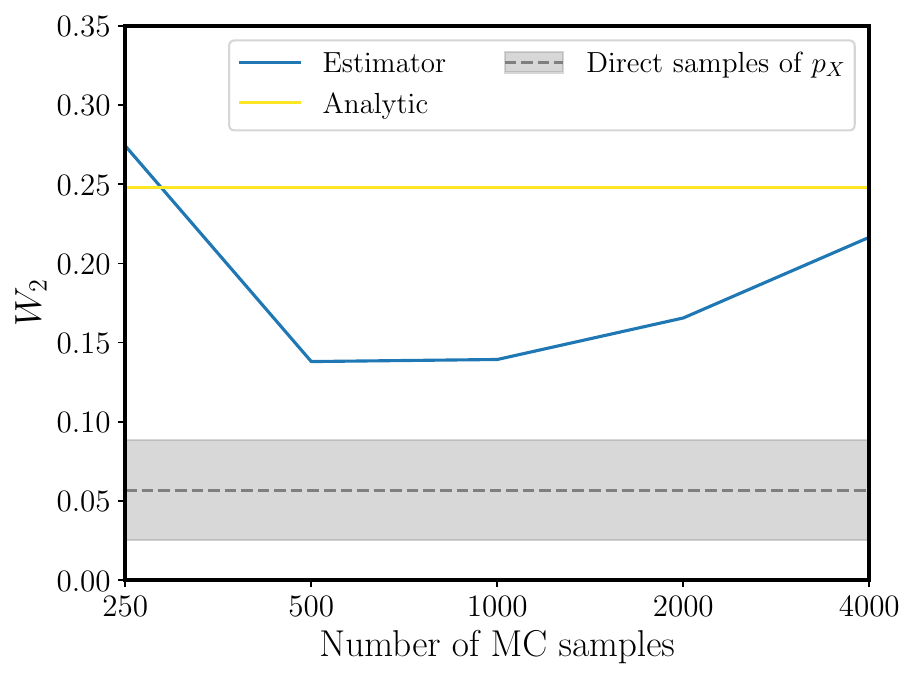}}  
\end{subfigure}
\end{center}
\caption{\label{fig:mc_samples_dim2} Effect of increasing the number of MC samples in the estimator of $\nabla \log p(y)$ for the mixture of Gaussians test density introduced in \autoref{sec:gmm} with $d=2$}
\end{figure}

\begin{figure}[t!]
\begin{center}
\begin{subfigure}[$W_2$ vs. $m$]
{\includegraphics[width=0.49\textwidth]{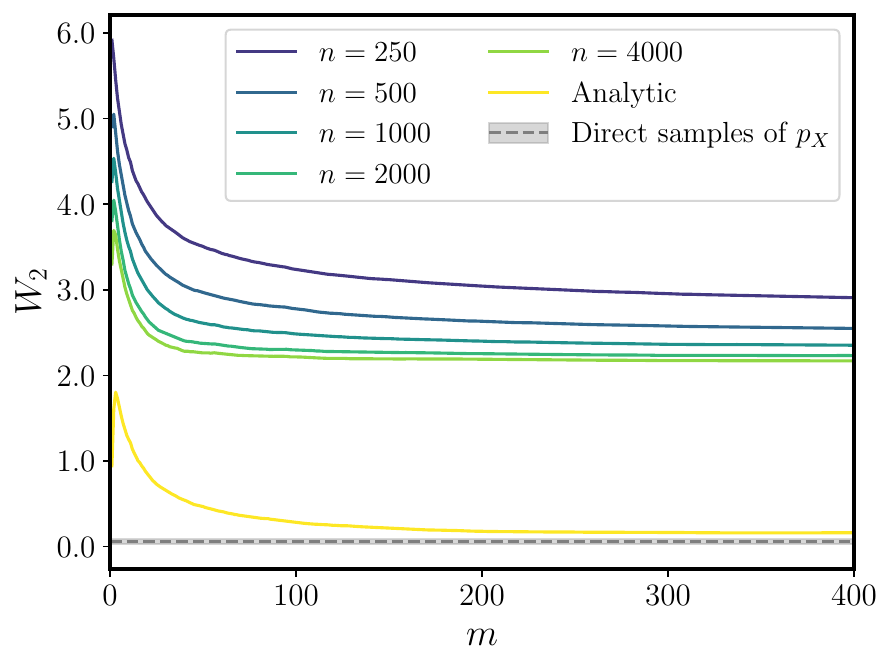}}
\end{subfigure}
\begin{subfigure}[Final $W_2$ vs. number of MC samples]
{\includegraphics[width=0.49\textwidth]{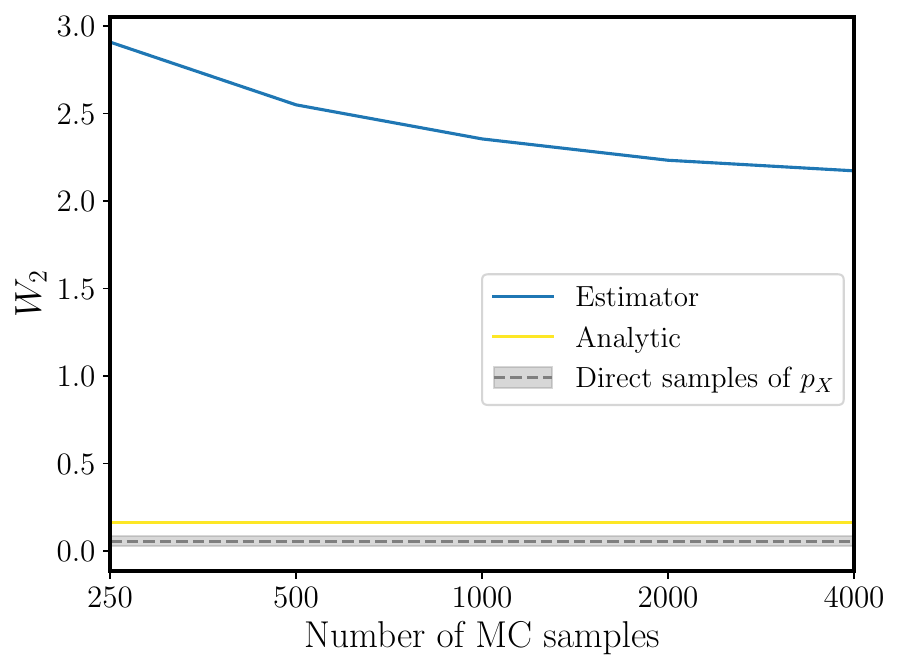}}  
\end{subfigure}
\end{center}
\caption{\label{fig:mc_samples_dim8} Effect of increasing the number of MC samples in the estimator of $\nabla \log p(y)$ for the mixture of Gaussians test density introduced in \autoref{sec:gmm} with $d=8$}
\end{figure}

\section{Score estimation} \label{app:score}

Langevin MCMC requires the smoothed score function $g(y;\sigma) = \nabla \log p(y)$. Results presented earlier in \autoref{sec:experiments} assumed access to the analytic score function. Here, we use the estimator presented in \eqref{eq:ghat_oat} with varying numbers of MC samples $n$. 

To prevent numerical underflow, we implemented \eqref{eq:ghat} as follows:
\begin{align}
A &= \logsumexp_{i=1}^{n} \bigl( -f(y+\sigma \epsilon_i)  \bigr), \\
B_+ &= \logsumexp_{j=1}^{n_+} \bigl(\log \epsilon_j - f(y+\sigma \epsilon_j)  \bigr), \label{eq:pos_term} \\
B_- &= \logsumexp_{k=1}^{n_-} \bigl(\log -\epsilon_k - f(y+\sigma \epsilon_k)  \bigr), \label{eq:neg_term} \\
{\myhat{g}}_1(y; \sigma) &= \frac{1}{\sigma} \cdot \Bigl( e^{{B_+} - A} - e^ {{B_-} - A}  \Bigr),
\end{align}
where $\logsumexp_i(a_i) \coloneqq a_{\rm max} + \log \sum_i \exp(a_i - a_{\rm max})$ with $a_{\rm max} \coloneqq {\rm max}_i \ a_i$. In \eqref{eq:pos_term} and \eqref{eq:neg_term}, $j=1, \dotsc, n_+$ and $k=1, \dotsc, n_-$ denote the indices for which $\epsilon$ is positive and negative, respectively, with $n_+ + {n_-} = n$. Note that the same Gaussian samples $\epsilon$ were used to evaluate the numerator and the denominator.

For a Gaussian mixture density introduced in \autoref{sec:gmm} with $d=2$, both the analytic and the estimated score functions converge to a low $W_2$, as shown in \autoref{fig:mc_samples_dim2}. The sample quality is on par with the analytic score function with $n$ as small as 500 and there is little benefit to increasing the $n$ past 500. 

For $d=8$, the estimated score reveals signs of error, as  \autoref{fig:mc_samples_dim8} shows. There is decreasing marginal return in $W_2$ as $n$ doubles, although higher $n$ does help reduce $W_2$ to some extent. 

Variance reduction techniques, such as importance reweighting, may help get more mileage from finite $n$. 



\section{Mixture of correlated Gaussians} \label{app:gmm_correlated}

In this section, we study a correlated test density, namely a mixture of two 2-dimensional Gaussians with full covariances:
\< \label{eq:gmm_correlated}
p(x) = \alpha\,\mathcal{N}(x;\mu,\Sigma_0)+ (1-\alpha)\, \mathcal{N}(x;-\mu,\Sigma_1).
\>

We choose $\mu=3\cdot 1_d$, $\Sigma_0 = R \ {\rm diag}(\frac{1}{4}, 4) \ R^T$, $\Sigma_1 = R^T\  {\rm diag}(1, 9) \  R$, $\alpha=\frac{4}{5}$, and the rotation matrix $$R = \begin{pmatrix} \cos \theta & -\sin \theta \\
\sin \theta & \cos \theta
\end{pmatrix}$$ with $\theta=\pi/360$. \autoref{fig:gmm_correlated} compares the performance of the same Langevin MCMC algorithm \citep{sachs2017langevin} used within the OAT scheme (a) with that used without Gaussian smoothing (b). In both cases here, we initialized the samplers at the origin (5000 particles total). The total number of iterations are the same between two algorithms (see~\autoref{app:experimental_detail} for details).
\begin{figure}[h!]
\begin{center}
\begin{subfigure}[OAT, $\sigma=16$, $m=1000$]
{\includegraphics[width=0.49\textwidth]{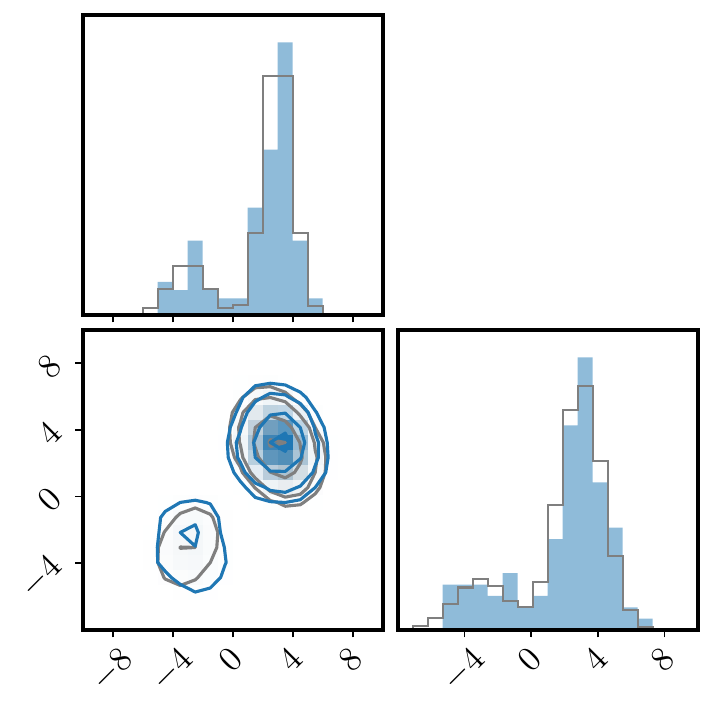}}
\end{subfigure}
\begin{subfigure}[Langevin MCMC]
{\includegraphics[width=0.49\textwidth]{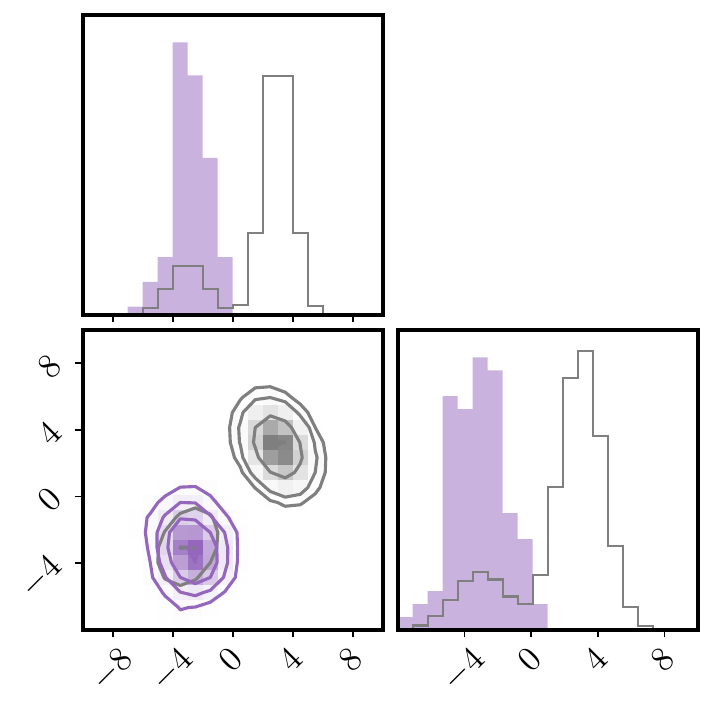}}  
\end{subfigure}
\end{center}
\caption{\label{fig:gmm_correlated} Mixture of correlated Gaussians. }
\end{figure}

\end{document}